\pgfplotsset{compat=1.18}
\theoremstyle{plain}
\newtheorem{theorem}{Theorem}[section]
\newtheorem{proposition}[theorem]{Proposition}
\newtheorem{definition}{Definition}[section]
\newtheorem{lemma}[theorem]{Lemma}
\theoremstyle{definition}
\theoremstyle{remark}
\newtheorem{remark}[theorem]{Remark}
\DeclareFontFamily{U}{mathx}{\hyphenchar\font45}
\DeclareFontShape{U}{mathx}{m}{n}{
      <5> <6> <7> <8> <9> <10>
      <10.95> <12> <14.4> <17.28> <20.74> <24.88>
      mathx10
      }{}
\DeclareSymbolFont{mathx}{U}{mathx}{m}{n}
\DeclareMathOperator{\trace}{tr}
\DeclareMathOperator{\R}{\mathbb{R}}
\DeclareMathOperator{\tr}{\operatorname{tr}}
\DeclareMathOperator{\diag}{diag}
\DeclareMathOperator{\dg}{dg}
\newcommand{\f}[1]{\mathbf{#1}} 
\newcommand\Rdd{\R^{d_1 \times d_2}}
\newcommand{\QER}{\texttt{Q3R}}
\newcommand{\mathQER}[2]{\operatorname{Q3R}_{#1,#2}}
\newcommand{\RO}[2]{\mathcal{R}_{#1,#2}}
\newcommand{\x}{\f{x}}
\newcommand{\W}{\f{W}}
\newcommand{\epsold}{\epsilon_{\text{old}}}
\newcommand{\epsnew}{\epsilon_{\text{new}}}
\newtcolorbox{wrapalgo}[2][]{ floatplacement={r}, float,width=#2,enhanced,boxrule=0.5pt,arc=2pt, colframe=black,colback=white,drop shadow, caption={#1} }
\newlength\figureheight
\newcommand{\titlename}{\QER: Quadratic Reweighted Rank Regularizer for Effective Low-Rank Training}
\title{\titlename}
\author{%
Ipsita Ghosh\thanks{Equal contribution.}\\
Department of Computer Science \\
University of Central Florida \\
\texttt{ipsita.ghosh@ucf.edu}
\And
Ethan Nguyen\footnotemark[1]\\
Department of Computer Science\\
University of North Carolina at Charlotte \\
\texttt{ethan.nguyen@e-10.net}
\And
Christian K\"ummerle\\
School of Data, Mathematical and Statistical Sciences \\ Department of Computer Science \\
University of Central Florida \\
\texttt{kuemmerle@ucf.edu}}
\begin{document}

\maketitle

\begin{abstract}

Parameter-efficient training based on low-rank optimization has become a highly successful tool for fine-tuning large deep learning models. However, these methods often fail for low-rank pre-training, where simultaneously maintaining low-rank weight structure and optimizing the task objective remains challenging. We propose the \emph{Quadratic Reweighted Rank Regularizer} (\QER), which leads to a novel low-rank-inducing training strategy inspired by the iteratively reweighted least squares (IRLS) framework. \QER is based on a quadratic regularizer term that majorizes a smoothed log-determinant rank surrogate. Unlike other low-rank training techniques, \QER can train weight matrices to prescribed low target ranks while achieving predictive performance comparable to dense models, with small computational overhead and full compatibility with existing architectures.
For example, we demonstrate a $\QER$-regularized ViT-Tiny experiment where truncating the model to $60\%$ and $80\%$ of its parameters results in only minor absolute accuracy drops of $1.3\%$ and $4\%$, respectively, on CIFAR-10.
We confirm the efficacy of \QER on Transformers across both vision and language tasks, including low-rank fine-tuning.

The code is available at \url{https://github.com/ThatE10/q3r.git}.

\end{abstract}

\section{Introduction}

Modern deep learning architectures continue to grow in size and complexity \cite{radford2019language}, creating a growing demand for efficient training methodologies. Low-rank regularization has emerged as a powerful paradigm for addressing these challenges by explicitly constraining the parameter search space through matrix factorization. This approach builds on the empirical observation that neural networks exhibit inherent low-dimensional structure in their weight matrices during training \cite{gomez2022exploring}.

Practical implementations face three key challenges: (1) performance degradation compared to full-rank baselines, (2) optimal rank selection across layers, and (3) maintaining training stability. Prior work addresses these through spectral initialization \cite{gomez2022exploring}, orthogonality regularization \cite{yan2020learning}. 

Recent advances in parameter-efficient fine-tuning (PEFT) have expanded the low-rank training paradigm through methods like Low-Rank Induced Training (LoRITa) \cite{lorita}. These approaches maintain the original model architecture during inference while inducing low-rank structure through strategic layer overparameterization during training. LoRITa specifically decomposes weight matrices $\mathbf{W}_i$ into products $\prod_{k=1}^N \mathbf{W}_i^k$ during optimization, enabling implicit rank reduction through singular value truncation post-training \cite{lorita}. This methodology demonstrates that explicit rank constraints can be replaced by training dynamics that naturally favor low-rank solutions. 

Despite their promise, existing low-rank training approaches present several notable limitations. Traditional low-rank methods often suffer from performance degradation relative to full-rank baselines \cite{gomez2022exploring,yan2020learning}. Methods such as LoRA and LoRITa, while effective at reducing trainable parameters, can struggle to capture the full structure making it difficult to generalize to complex tasks \cite{lorita}. Furthermore, PEFT techniques introduce additional hyperparameters (such as rank and scaling factors) whose optimal values may not generalize across architectures, datasets, or downstream tasks, often requiring extensive re-tuning and experimentation. In multilingual or low-resource settings, PEFT methods like LoRA have been observed to yield inconsistent results, sometimes improving language-specific generation at the expense of reasoning or generalization abilities \cite{khade2025}. Combining multiple PEFT modules for multi-task or continual learning can also lead to increased memory usage and system complexity, offsetting some of the intended efficiency gains.
Overall, it can be observed that the advances in LoRA-type parameter-efficient training methods have not yet been able to be translated to enable robust low-rank pre-training.

\section{Contribution}
In this paper, we propose the \textbf{Quadratic Reweighted Rank Regularizer} (\QER), which bridges this gap by introducing an optimizer-compatible regularization framework based on smoothed log-determinant rank surrogates outlined in \Cref{section:Q3R:derivation} and specifically designed for low-rank pre-training. 
While theoretically grounded in saddle-escaping second-order optimization methods, it comes with little computational overhead compared to unregularized training despite its efficacy for promoting low-rank neural network weight matrices. Additionally, we propose the Adam variant Adam$\QER$ in \Cref{sec:nntraining}, which is tailored to optimizing $\QER$-regularized loss functions and which improves the performance of training $\QER$-regularized models.

We demonstrate in numerical experiments that $\QER$ is able to reduce the number of parameters in ViT models by, for example, $60\%$ during pre-training on CIFAR-10, with only around $1.3\%$ accuracy drop. We validate the performance of $\QER$ for low-rank fine-tuning with experiments fine-tuning RoBERTa and Llama3 on GLUE tasks, for which $\QER$ achieves comparable performance compared to dense fine-tuning and state-of-the-art low-rank PEFT methods. Compared to state-of-the-art low-rank training methods such as LoRA \cite{hu2021lora}, LoRITa \cite{lorita}, $\QER$ consistently produces models with better generalization at high truncation levels, without requiring overparameterization or full-rank warmup phases.

In \Cref{sec:method}, we elaborate the methodology of the proposed work, which is further discussed with a detailed derivation in the Supplementary material in \cref{sec:relation}. In \Cref{sec:experiments}, we demonstrate the performance of \QER, in comparison to other state-of-the-art methods. We continue with more experimental evaluations in the Supplementary material in \Cref{sec:more_exp}. \Cref{sec:more_exp} also includes discussions of the computational aspects of our methodology. In \Cref{sec:ablation}, we demonstrate the robustness of \QER\ to different hyperparameter variations. We briefly discuss the limitations of our work in \Cref{sec:limit}.

\section{Related Work}
\label{sec:related}

\paragraph*{Parameter-Efficient Fine-Tuning (PEFT)}
Parameter-efficient fine-tuning is the concept of modifying only parts of a fully parametrized pretrained model to excel at a specific task of interest. PEFT methods such as adapters \cite{Houlsby2019} and LoRA \cite{hu2021lora} introduce small, trainable modules into a frozen pretrained model, drastically reducing the number of parameters to be updated. These techniques often match full fine-tuning performance with only a tiny fraction of trainable parameters. However, the low-rank constraints that make LoRA-style methods efficient for downstream tasks also limit capacity if applied during pre-training. Training from scratch with only low-rank adapters or factorizations (instead of full-rank weight updates) tends to underperform, as it restricts optimization to a low-dimensional subspace \cite{Zhao2024}. LoRA assumes a well-formed pretrained weight $W$ plus a low-rank perturbation of rank-$r$ adapter matrices $A \in \mathbb{R}^{d \times r}$ and $B \in \mathbb{R}^{r \times d}$ such that $\Delta W = AB$; without a strong initial $W$, such updates struggle to capture the full complexity needed for learning from scratch. \emph{KronA} replaces LoRA’s product with a Kronecker factorization for better rank-parameter trade-offs \cite{Edalati2022}.  
\emph{DoRA} decouples update magnitude and direction via a learnable scaling factor, improving upon LoRA’s expressivity \cite{Liu2024}.
\emph{Compacter} uses shared, low-rank, Kronecker-parameterized adapters across layers, matching standard adapters with only 0.05 \% extra parameters \cite{KarimiMahabadi2021}.

\paragraph*{Low-Rank Training in Neural Networks.}
Neural network training often exhibits implicit low-rank structure during training, as optimization dynamics like SGD with weight decay tend to bias models toward low-rank solutions \cite{Galanti-2025SGD,Huh2023simplicitybias}. This observation has motivated a range of explicit low-rank training methods that constrain parameter matrices directly. A common approach factorizes weights and trains the factorized weights instead, reducing compute and memory costs with minor accuracy loss \cite{Kamalakara2022}. Techniques like LoRA \cite{Wei2024-LowRankTraining} and its extensions \cite{relora} inject low-rank updates into pretrained Transformer weights, enabling parameter-efficient adaptation. However, pre-training directly under low-rank constraints remains more challenging. \cite{Wei2024-EffectiveTraining} shares a similar motivation to ours—studying the limitations of LoRA-style low-rank pre-training and proposing an alternative regularization-driven approach to induce low-rank structure during training. Although we approach the problem through a different optimization framework, their analysis and framing of the limitations of adapter-based methods are highly relevant and can guide refinement of both the positioning and justification of our method. Regularization-based approaches use nuclear norm or log-determinant surrogates to promote low-rank solutions \cite{Savostianova2023}, while others apply orthogonality constraints and adaptive rank pruning \cite{yan2020learning,yang2020sparseSVD}. In Transformers, low-rank parameterizations have achieved $2$--$5\times$ compression with minimal performance drop \cite{lorita}, and Cuttlefish \cite{wang2023cuttlefish} automates rank selection by monitoring stable ranks during a warmup phase. Still, many methods rely on post-hoc truncation or overparameterization, which do not minimize rank during training. Our work addresses this gap by directly optimizing for low-rank solutions via reweighted least squares, promoting compact representations throughout pre-training.
However, many of these methods rely on overparameterization or post-hoc truncation and do not directly minimize rank during training. In contrast, our approach promotes low-rank structure directly via optimization, using a principled regularization technique rooted in reweighted least squares.

\paragraph{Spectral Low-Rank Regularization.}
A related line of work studies algorithms that impose low-rankness of neural network matrices based on the nuclear norm, Schatten-$p$ quasi-norm or a direct rank regularization. In particular, \cite{alvarez2017compression} proposed a proximal stochastic gradient descent applied to the nuclear norm. Methods that apply spectral truncation (e.g., via truncated SVD) during training or post-training \cite{Yang_2020_CVPR_Workshops,TRP_Xu2020} can also be understood within this framework. A downside of such approaches is the computational overhead: they require at least a truncated SVD at \emph{every} iteration, which quickly becomes computationally prohibitive for larger networks. Moreover, arguably, such aggressive, discontinuous rank regularization interferes with the continuous gradient-based training process of the network.

In contrast, our proposed $\QER$ regularizer imposes low-rankness more \emph{gradually} by reweighting at periodic intervals, at which a smoothing parameter is updated as well, which we find to be sufficient for convergence while significantly reducing the computational overhead. This “soft” imposition of low-rank structure aligns with insights from IRLS-based methods (see below). A spectral, Schatten-$p$ regularization is also the core of the motivation of LoRITa \cite{lorita}; however, in this case, the spectral regularization can be seen as a justification of an (unweighted) squared Frobenius norm regularization on factor matrices, whereas $\QER$ does not work with factor matrices and considers a \emph{reweighted} quadratic term.

\paragraph*{Rank Regularization and IRLS.} In a line of work that significantly precedes the interest in low-rank techniques for deep learning, the problem of identifying or learning a low-rank matrix from noisy, under-determined linear measurements has been studied for decades in control theory \cite{fazel2003logdet,Doerfler-2022Bridging}, recommender systems \cite{Koren2009bellkor,koren_bell_volinsky} and compressed sensing \cite{recht2010guaranteed,Davenport16}. 
Even in this setting, which is fundamentally linear unlike the training of deep neural networks, the minimization of a rank objective subject to the constraints is NP-hard \cite{Natarajan95,recht2010guaranteed}, motivating surrogate formulations or relaxations. Convex relaxations using the nuclear norm~\cite{candes2008exact,recht2010guaranteed,Davenport16,CaiWeiExploiting18} have been popular for a long time due to their strong recovery guarantees under suitable assumptions and their ability to be tackled using the machinery of convex optimization \cite{Boyd04}, but fail to result in a convex formulation in the deep learning setting. Even disregarding computational limitations of convex regularizations \cite{chen_chi18}, \emph{non-convex} rank surrogates such as the log-determinant penalty \cite{fazel2003logdet} lead to algorithms which are more data-efficient, as evidenced for a variety of low-rank matrix recovery problems \cite{fazel_hindi_boyd04,Candes13,KS18,KuemmerleMaly-2023Recovering}. \\
If combined with a suitable smoothing strategy, the non-smooth optimization framework of Iteratively Reweighted Least Squares (IRLS), originally pioneered by Weiszfeld \cite{Weiszfeld37,WeiszfeldPlastria09,BeckSabach15}, has emerged as a leading algorithmic framework to optimize non-convex rank surrogates \cite{Fornasier11,Mohan2010reweighted,KuemmerleMayrinkVerdun-ICML2021,GhoshTasissaKuemmerle-EDG2024} as it provides good trade-offs between scalability, data-efficiency, saddle-point evasion (present due to inherent non-convexity) and fast convergence. On a high level, IRLS solves a sequence of weighted Frobenius-norm problems that progressively suppress smaller singular values. \\
The proposed rank regularization term $\QER$~(detailed in \Cref{section:Q3R:derivation}) builds on recent improvements on low-rank IRLS weight operator formulations \cite{KuemmerleMayrinkVerdun-ICML2021,GhoshTasissaKuemmerle-EDG2024} (or reweighting strategies), which, unlike older formulations \cite{Fornasier11,Mohan2010reweighted}, allow for fast saddle-point evasion and locally quadratic convergence rates. To the best of our knowledge, IRLS-type low-rank regularization, which is at the core of $\QER$, has not been explored in the literature in the context of deep learning so far. While providing an interesting perspective on older IRLS formulations \cite{Fornasier11,Mohan2010reweighted} from an \emph{average gradient outer product} perspective, the recent work \cite{Radhakrishnan-2025AGOP_IRLS} does not provide insights towards the derivation of quadratically convergent IRLS methods \cite{KuemmerleMayrinkVerdun-ICML2021,GhoshTasissaKuemmerle-EDG2024}, nor does it extend the framework towards low-rank training of deep networks.

In the language of the low-rank recovery literature, LoRA-type \cite{hu2021lora} approaches are known under the name of (\emph{Burer-Monteiro} \cite{Burer03}) matrix factorization methods \cite{SunL15,ZhengL15,MaWangChiChen19,ChiLuChen19,zhang2022accelerating,Xu-2023Power,Stoeger-2024Non}. While popular in applications \cite{koren_bell_volinsky,Rendle-2022iALS} due to their scalability, it is known that they can be outperformed by IRLS or Riemannian optimization approaches in more challenging setups involving, e.g., limited data \cite{Zilber2022_GNMR,Luo-2024recursive}, which is one motivation of our work.
\section{Methodology}\label{sec:method}
In this section, we provide a detailed derivation and definition of the \emph{Quadratic Reweighted Rank Regularizer} $\QER$ in \Cref{section:Q3R:derivation}, before we embed it into a training scheme to train low-rank weights of deep learning models in \Cref{sec:nntraining}.

\subsection{Low-Rank Regularization via \QER} \label{section:Q3R:derivation}
Given a neural network with $K$ weight matrices $\Theta = \{\f{W}_i: \f{W}_i \text{ is weight matrix}, i=1,\ldots, K\}$, a functionally ideal regularization term to add to the loss function of the network for the promotion of a low-rank weight is simply the \emph{rank} of $\f{W}$. However, $\operatorname{rank}(\f{W})$ is non-convex and not continuous, and thus hard to incorporate into a gradient-based training methodology. 

In the following, we consider the \emph{non-convex}, but continuously differentiable rank surrogate $F_{\epsilon}(\cdot)$ called \emph{$\epsilon$-smoothed log-determinant}, defined as
\begin{equation}\label{eq:smoothing:Fpeps}	
F_{\epsilon}(\f{W}):= \sum_{i=1}^d f_{\epsilon}(\sigma_i(\f{W})),\text{ where }
f_{\epsilon}(\sigma) =
\begin{cases}
 \epsilon^2 \left(\log(\sigma)- \log(\epsilon)\right) +\frac{1}{2}\epsilon^2, & \text{ if } \sigma \geq \epsilon, \\
  \frac{1}{2} \sigma^2, & \text{ if } \sigma < \epsilon,
 \end{cases}
 \end{equation}
where $\sigma_i(\f{W})$ is the $i$-th singular value of $\f{W}$, and which is parameterized by a smoothing parameter $\epsilon > 0$. The definition of \cref{eq:smoothing:Fpeps} follows \cite{KuemmerleMaly-2023Recovering} and is related to the log-determinant heuristics $\log \operatorname{det}(\f{W} + \epsilon I) = \sum_{i=1}^d \log(\sigma_i(\f{W}) + \epsilon)$ defined for positive semi-definite matrices $\f{W} \in \R^{d \times d}$ of \cite{fazel2003logdet,Candes13,Radhakrishnan-2025AGOP_IRLS}. Compared to other log-determinant type functions, $F_{\epsilon}(\cdot)$ from \cref{eq:smoothing:Fpeps} has a few advantages: It is lower bounded by $0$ for any $\epsilon$ and has a $1$-Lipschitz gradient, making the objective compatible with gradient-based optimizers without extensive step-size adaptation. Furthermore, its smoothing parameter $\epsilon$ \emph{regulates} the non-convexity of its optimization landscape and recovers the well-known squared Frobenius norm as $F_{\epsilon}(\f{W}) = \frac{1}{2} \|\f{W}\|_F^2$ in the case of $\sigma_1(\f{W}) < \epsilon$.

The rank regularizer we study in this paper, however, is not simply $F_{\epsilon}(\cdot)$: If we were to work directly with the $\epsilon$-smoothed log-determinant, its gradients $\nabla F_{\epsilon}(\f{W})$ would require a full spectral decomposition of $\f{W}$ at each training iteration (see supplementary material).

Instead, we consider, given an expansion center point $\f{W}'$ (which may correspond to the current weight matrix during a neural network training dynamics), the
\emph{quadratic model} $Q_{\epsilon}(\cdot | \f{W}')$ defined as 
\begin{equation}\label{qd_model}
Q_{\epsilon}(\mathbf{W} \mid \mathbf{W}') =F_{\epsilon}(\mathbf{W}') + \tfrac{1}{2}\,\langle \mathbf{W},\,\mathcal{R}_{\mathbf W',\epsilon}(\mathbf{W})\rangle \;-\;\tfrac{1}{2}\,\langle \mathbf{W}',\,\mathcal{R}_{\mathbf W',\epsilon}(\mathbf{W}')\rangle, 
\end{equation}
where $\RO{\f{W}'}{\epsilon}(\cdot): \Rdd \to \Rdd$ is a positive definite, so-called \emph{reweighting operator} \cite{KuemmerleMayrinkVerdun-ICML2021,GhoshTasissaKuemmerle-EDG2024}, defined in \Cref{def:optimalweightoperator} below.

\begin{definition}[Reweighting Operator \cite{KuemmerleMaly-2023Recovering}]\label{def:optimalweightoperator} Let $\epsilon > 0$ and $\f{W}' \in \mathbb{R}^{d_1 \times d_2}$ be a matrix with singular value decomposition 
$\f{W}' = \f{U}_{\f{W}'} \diag (\sigma_i(\f{W}')) \f{V}_{\f{W}'}^{\top}$, where $\f{U} \in \R^{d_1 \times r(\f{W}',\epsilon)}$ and $\f{V} \in \R^{d_2 \times r(\f{W}',\epsilon)}$ are matrices of the leading $r(\f{W}',\epsilon)$ left and right singular vectors satisfying  $\f{U}_{\f{W}'} = \begin{bmatrix}
    \f{U} && \f{U}_{\perp}
\end{bmatrix} \in \R^{d_1 \times d_1}$ and $\f{V}_{\f{W}'} = \begin{bmatrix}
    \f{V} && \f{V}_{\perp}
\end{bmatrix} \in \R^{d_2 \times d_2}$, and 
\begin{equation} \label{def:repsW}
      r(\f{W}',\epsilon) := |\{i \in \{1,\ldots,\min(d_1,d_2)\}: \sigma_i(\f{W}') > \epsilon\}|  
\end{equation}
is the number of singular values of  $\f{W}'$ larger than $\epsilon$. Then we define the \emph{reweighting operator} $\RO{\f{W}'}{\epsilon}: \Rdd \to \Rdd$ associated to the matrix $\f{W}'$ and smoothing parameter $\epsilon$ as
\[
\RO{\f{W}'}{\epsilon}(\f{W})  =  \f{U}_{\f{W}'} \Sigma_{\epsilon,d_1}^{-1}\f{U}_{\f{W}'}^\top \f{W} \f{V}_{\f{W}'} \Sigma_{\epsilon,d_2}^{-1}\f{V}_{\f{W}'}^\top,
\]
where $\Sigma_{\epsilon,d} = \diag(\max(\sigma_i(\f{W}')/\epsilon,1))_{i=1}^d \in \R^{d \times d}$ for $d \in \{d_1, d_2\}$.
\end{definition}
The reweighting operator satisfies the following simple properties (shown in the supplementary material), which makes working with it computationally feasible.
\begin{lemma} \label{lemma:RO:properties}
For $\epsilon>0$ and $\f{W}'$, let $\f{U} \in \R^{d_1 \times r(\f{W}',\epsilon)}$, $\f{V} \in \R^{d_2 \times r(\f{W}',\epsilon)}$ and $\RO{\f{W}'}{\epsilon}:\Rdd \to \Rdd$ be as in \Cref{def:optimalweightoperator}. Then the following statements are true:

1. $\RO{\f{W}'}{\epsilon}(\cdot)$ is a positive definite operator with respect to the Frobenius inner product $\langle \f{A},\f{B}\rangle = \trace(\f{A}^{\top} \f{B})$, i.e., $\langle \f{W}, \RO{\f{W}'}{\epsilon}(\f{W})\rangle > 0$ for all non-zero $\f{W} \in \Rdd$.

2. The image $\RO{\f{W}'}{\epsilon}(\f{W})$ of any $\f{W}\in \Rdd$ w.r.t. the reweighting operator can be computed as 
    \begin{equation} \label{eq:efficient:RO:image}
    \begin{split}
    \RO{\f{W}'}{\epsilon}(\f{W}) &= \epsilon^2\f{U} \Sigma^{-1} \f{U}^\top \f{W} \f{V} \Sigma^{-1} \f{V}^\top + \epsilon\f{U} \Sigma^{-1} \f{U}^\top \f{W} (\mathbf{I} - \f{V} \f{V}^\top) \\
    &+ \epsilon(\mathbf{I} - \f{U} \f{U}^\top) \f{W} \f{V} \Sigma^{-1}\f{V}^\top +  (\mathbf{I} -\f{U} \f{U}^\top) \f{W}(\mathbf{I} - \f{V} \f{V}^\top),
    \end{split}
    \end{equation}
    where $\Sigma = \diag(\sigma_i(\f{W}'))_{i=1}^{r(\f{W}',\epsilon)} \in \R^{r(\f{W}',\epsilon)\times r(\f{W}',\epsilon)}$ is the diagonal matrix containing the largest $r(\f{W}',\epsilon)$ singular values of $\f{W}'$. \\
3. The quadratic model of \cref{qd_model} satisfies, for all $\f{W},\f{W}' \in \Rdd$, that 
\begin{equation} \label{eq:quadraticmodel:linearterm}
Q_{\epsilon}(\mathbf{W} \mid \mathbf{W}') = F_{\epsilon}(\mathbf{W}') + \langle \nabla F_{\epsilon}(\mathbf{W}'), \mathbf{W} - \mathbf{W}' \rangle + \tfrac{1}{2}\,\langle \mathbf{W} - \mathbf{W}', \,\mathcal{R}_{\mathbf W',\epsilon}(\mathbf{W} - \mathbf{W}') \rangle. 
\end{equation}
\end{lemma}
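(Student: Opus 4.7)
The plan is to prove the three properties in the order $(2) \Rightarrow (1) \Rightarrow (3)$, since the explicit formula in $(2)$ reveals that $\RO{\f{W}'}{\epsilon}$ acts as a symmetric ``sandwich'' by two SPD matrices, which immediately gives $(1)$, and $(3)$ then follows from a standard bilinear expansion together with the key identity $\RO{\f{W}'}{\epsilon}(\f{W}') = \nabla F_{\epsilon}(\f{W}')$.

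For $(2)$, I would first rewrite the left- and right-hand factors of $\RO{\f{W}'}{\epsilon}$ in the form $\f{U}_{\f{W}'} \Sigma_{\epsilon,d_1}^{-1} \f{U}_{\f{W}'}^\top = \epsilon \, \f{U} \Sigma^{-1} \f{U}^\top + \f{U}_\perp \f{U}_\perp^\top = \epsilon\, \f{U}\Sigma^{-1}\f{U}^\top + (\f{I} - \f{U}\f{U}^\top)$, using that $\Sigma_{\epsilon,d_1}^{-1}$ is block-diagonal with block $\epsilon \Sigma^{-1}$ on the top $r(\f{W}',\epsilon)$ indices (where $\sigma_i > \epsilon$) and the identity block on the bottom, and that $\f{U}_\perp \f{U}_\perp^\top = \f{I} - \f{U}\f{U}^\top$ by orthogonality of $\f{U}_{\f{W}'}$. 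The analogous identity holds for the $\f{V}$-side. Plugging these two expressions into the definition of $\RO{\f{W}'}{\epsilon}(\f{W})$ and distributing the four cross-terms directly yields \cref{eq:efficient:RO:image}.

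For $(1)$, the representation $\RO{\f{W}'}{\epsilon}(\f{W}) = \f{A}\,\f{W}\,\f{B}$, with the two symmetric matrices $\f{A} := \f{U}_{\f{W}'} \Sigma_{\epsilon,d_1}^{-1} \f{U}_{\f{W}'}^\top$ and $\f{B} := \f{V}_{\f{W}'} \Sigma_{\epsilon,d_2}^{-1} \f{V}_{\f{W}'}^\top$, makes the result immediate: the diagonal entries of $\Sigma_{\epsilon,d_i}^{-1}$ are all strictly positive (they equal $\min(\epsilon/\sigma_i,1)>0$), so $\f{A}$ and $\f{B}$ are SPD and admit square roots $\f{A}^{1/2},\f{B}^{1/2}$. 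Then $\langle \f{W}, \RO{\f{W}'}{\epsilon}(\f{W})\rangle = \trace(\f{W}^\top \f{A}\f{W}\f{B}) = \| \f{A}^{1/2} \f{W} \f{B}^{1/2}\|_F^2 \geq 0$, with equality forcing $\f{A}^{1/2} \f{W}\f{B}^{1/2}=0$ and hence $\f{W}=0$. This sandwich form also shows that $\RO{\f{W}'}{\epsilon}$ is self-adjoint in the Frobenius inner product, a fact needed for $(3)$.

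For $(3)$, I expand the quadratic form using self-adjointness, obtaining $\tfrac{1}{2}\langle \f{W}-\f{W}', \RO{\f{W}'}{\epsilon}(\f{W}-\f{W}')\rangle = \tfrac{1}{2}\langle \f{W}, \RO{\f{W}'}{\epsilon}(\f{W})\rangle - \langle \f{W}-\f{W}', \RO{\f{W}'}{\epsilon}(\f{W}')\rangle - \tfrac{1}{2}\langle \f{W}', \RO{\f{W}'}{\epsilon}(\f{W}')\rangle$. Substituting the definition \cref{qd_model} of $Q_\epsilon$ and comparing, the statement \cref{eq:quadraticmodel:linearterm} reduces to the identity $\RO{\f{W}'}{\epsilon}(\f{W}') = \nabla F_{\epsilon}(\f{W}')$. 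This is the main obstacle. To verify it, I note that $\f{U}_{\f{W}'}^\top \f{W}' \f{V}_{\f{W}'} = \diag(\sigma_i(\f{W}'))$, so $\RO{\f{W}'}{\epsilon}(\f{W}') = \f{U}_{\f{W}'}\, \Sigma_{\epsilon,d_1}^{-1} \diag(\sigma_i) \Sigma_{\epsilon,d_2}^{-1}\, \f{V}_{\f{W}'}^\top$, whose $i$-th diagonal entry evaluates to $\epsilon^2/\sigma_i$ when $\sigma_i>\epsilon$ and to $\sigma_i$ when $\sigma_i\le\epsilon$, by the case split in the definition of $\Sigma_{\epsilon,d}$. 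On the other hand, since $F_\epsilon$ is an absolutely symmetric spectral function, Lewis' formula (or direct computation via the chain rule applied to the SVD) gives $\nabla F_{\epsilon}(\f{W}') = \f{U}_{\f{W}'}\diag(f_\epsilon'(\sigma_i(\f{W}'))) \f{V}_{\f{W}'}^\top$, and differentiating \cref{eq:smoothing:Fpeps} yields exactly $f_\epsilon'(\sigma) = \epsilon^2/\sigma$ for $\sigma\ge\epsilon$ and $f_\epsilon'(\sigma) = \sigma$ for $\sigma<\epsilon$ (note continuity at $\sigma=\epsilon$). The two expressions therefore coincide, completing $(3)$.
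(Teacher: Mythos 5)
Your proposal is correct and follows essentially the same route as the paper's own proof: part~(2) is established by the same block decomposition of $\f{U}_{\f{W}'}\Sigma_{\epsilon,d_1}^{-1}\f{U}_{\f{W}'}^\top$ into $\epsilon\,\f{U}\Sigma^{-1}\f{U}^\top + (\mathbf{I}-\f{U}\f{U}^\top)$ (and likewise on the $\f{V}$ side), and part~(3) hinges on exactly the same key identity $\RO{\f{W}'}{\epsilon}(\f{W}') = \nabla F_{\epsilon}(\f{W}')$, which the paper isolates as a separate ``gradient condition'' lemma and proves by the same SVD-diagonal computation combined with Lewis' differentiability formula for absolutely symmetric spectral functions. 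The only cosmetic divergence is in part~(1): you write the bilinear form as $\|\f{A}^{1/2}\f{W}\f{B}^{1/2}\|_F^2$ using matrix square roots of the two SPD sandwich factors, whereas the paper expands $\langle\f{W},\RO{\f{W}'}{\epsilon}(\f{W})\rangle = \sum_{i,j}\widetilde\sigma_i\widetilde\sigma_j\f{Z}_{ij}^2$ with $\f{Z}=\f{U}_{\f{W}'}^\top\f{W}\f{V}_{\f{W}'}$ and bounds the sum below by $\bigl(\min_i\widetilde\sigma_i\bigr)^2\|\f{W}\|_F^2$; both establish strict positivity for $\f{W}\neq 0$, and your square-root argument is if anything a bit slicker and also directly delivers self-adjointness, which the paper has to invoke separately in part~(3).
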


We note that the quadratic model $Q_{\epsilon}(\cdot \mid \mathbf{W}')$ \cref{qd_model} defined by $\mathcal{R}_{\f{W}',\epsilon}$ is a \emph{majorizing} quadratic model that satisfies $Q_{\epsilon}(\f{W} | \f{W}') \geq F_{\epsilon}(\f{W})$ for all $\f{W} \in \Rdd$.\footnote{This majorization property is implicitly postulated in \cite{KuemmerleMayrinkVerdun-ICML2021,KuemmerleMaly-2023Recovering}, but without proof. While proving this property is beyond the scope of this paper, we believe that the statement is true.} It is \emph{different} from a second-order Taylor expansion of the $F_{\epsilon}(\cdot)$ about $\f{W}'$, which would only be an \emph{approximation, but no majorization} due to the non-convex nature of $F_{\epsilon}(\cdot)$. The quadratic model can still be related to a second-order Taylor expansion of $F_{\epsilon}$ via 
\cref{eq:quadraticmodel:linearterm} as each generalized Hessian \cite{HiriartUrruty84} $\partial^2 F_{\epsilon}(\f{W}')$ of $F_\epsilon$ satisfies $\partial^2 F_{\epsilon}(\f{W}') \preceq \RO{\f{W}'}{\epsilon}$ in the Loewner order.
\begin{algorithm}
\caption{Update Reweighting Operator $\RO{\f{W}'}{\epsold}(\cdot) \mapsto \RO{\f{W}}{\epsnew}(\cdot)$}\label{algo:updateR}
\begin{algorithmic}[1]
\REQUIRE NN weight matrix  $\f{W} \in \mathbb{R}^{d_1 \times d_2}$; target rank ${r_\text{target}}$; prev. smoothing parameter $\epsold$.
\ENSURE Updated $\epsnew$, reweighting operator $\RO{\f{W}}{\epsnew}$ (via $\Sigma$, $\f{U}$, $\f{V}$), envelope rank $r_{\text{env}}$

\STATE Compute  
    $[\f{U},\Sigma,\f{V}] = \operatorname{SVD}^{\epsold}(\f{W})$ of $\f{W}$, 
where $\operatorname{SVD}^{\epsold}(\cdot)$ computes a partial singular value decomposition of its input up to order $r(\cdot,\epsold)$ (see \cref{def:repsW}) as well as $\sigma_{r_\text{target}+1}(\f{W})$. 
\STATE
$\epsnew=\min\left(\epsold,\sigma_{r_{\text{target}}+1}(\f{W})\right)$. \hfill $\triangleright$ \textsc{Update smoothing \cref{eq:epsilon:update}}
\STATE $r_{\text{env}} = r(\f{W},\epsnew)$ \hfill $\triangleright$ \textsc{Update rank envelope}
\STATE Set $\f{U} = \f{U}_{:,1:r_{\text{env}}}$, $\f{V} = \f{V}_{:, 1:r_{\text{env}}}$, $\Sigma = \Sigma_{1:r_{\text{env}},1:r_{\text{env}}}$ \hfill $\triangleright$ \textsc{Restrict part. SVD matrices}

\STATE \textbf{return} Reweighting operator $\RO{\f{W}}{\epsnew}$\! implicitly defined by $\f{U}\!\in \!\R^{d_1 \times r_{\text{env}}}$, $\f{V}\!\in\!\R^{d_2 \times r_{\text{env}}}$, $\Sigma$ \& $\epsnew$.
\end{algorithmic}
\end{algorithm}

We observe that in the quadratic model $Q_{\epsilon}(\f{W} \mid \f{W}')$ of  \cref{qd_model}, the only term that depends on $\f{W}$ is the second summand. Thus, to obtain a simple, \emph{differentiable} regularizer term that can be incorporated into a deep learning framework, we define the \emph{Quadratic Reweighted Rank Regularizer} $\QER$ of a neural network weight matrix $\f{W} \in \Rdd$, given $\f{W}' \in \Rdd$ and $\epsilon > 0$, as $\mathQER{\f{W}'}{\epsilon}: \Rdd \to \R$ with
\begin{equation}\label{def:q3r_val}
    \mathQER{\f{W}'}{\epsilon}(\mathbf{W}) = \frac{1}{2}\langle \f{W}, \mathcal{R}_{\f{W}',\epsilon}(\f{W})\rangle.
\end{equation}

As we see in the next section, it is simple and tractable to compute its gradient $\nabla_\f{W} \mathQER{\f{W}'}{\epsilon}(\mathbf{W}) \in \Rdd$, which can be used by any gradient-based optimizer. 

Furthermore, we periodically (but not at each training iteration) \emph{update} the reweighting operator of $\QER$ (and thus, the underlying quadratic model $Q_{\epsilon}(\f{W} \mid \f{W}')$) by setting $\f{W}' \leftarrow \f{W}$ and re-compute $\RO{\f{W}'}{\epsilon}$, for which a truncated  SVD of $\f{W}$ is sufficient due to \cref{eq:efficient:RO:image}. Additionally, whenever updating $\RO{\f{W}'}{\epsilon}$, we apply the \emph{non-increasing} update
\begin{equation} \label{eq:epsilon:update}
     \epsilon \leftarrow \min\bigl(\epsilon,\sigma_{r_{\text{target}}+1}(\f{W})\bigr)
\end{equation}
to the smoothing parameter $\epsilon$, which uses a \emph{target rank} parameter $r_{\text{target}}$ as an input. The rationale of this smoothing parameter update is two-fold: first, this choice gives \emph{partial control} on the expected rank of the weight matrix after training, as the value of $\RO{\f{W}'}{\epsilon}(\f{W})$ tends to $0$ if $\epsilon$ follows the dynamics of \cref{eq:epsilon:update} in the case of $\epsilon \rightarrow 0$ for matrices $\f{W}$ whose row and column spaces are both orthogonal to the columns of $\f{U}$ and $\f{V}$, respectively. Second, this choice increases the \emph{non-convexity} the $\epsilon$-smoothed log-determinant $F_{\epsilon}$ underlying $\QER$ \emph{gradually} \cite{Yang-RA-L2020}, facilitating a fast convergence to true low-rank solutions without becoming trapped in high-rank local minima \cite{KuemmerleMayrinkVerdun-ICML2021,KuemmerleMaly-2023Recovering}. We summarize computational steps of a reweighting operator update in \Cref{algo:updateR}.

\subsection{Neural Network Training via AdamQ3R} \label{sec:nntraining}
Let now $y:\Pi_{\ell = 1}^{K} \R^{d_1^\ell \times d_2^\ell} \times \R^{d_{\text{in}}} \to \R^{d_{\text{out}}}$ be the input-output mapping of a deep neural network that depends on weight parameter matrices $\Theta = \{\f{W}_k \in \R^{d_1^k \times d_2^k}: \f{W}_k \text{ is weight matrix}, k=1,\ldots, K\}$. For a Transformer-based architecture such as Vision Transformer \cite{vit}, the weight matrices include square layer- and head-wise query, key and value weight matrices $\f{W}_q,\f{W}_k,\f{W}_v \in \R^{d \times d}$ as well as rectangular projection and MLP layer weight matrices. Given a pairwise loss $\ell(\cdot,\cdot)$ such as cross entropy and a training dataset $\{x_i,y_i\}_{i=1}^n$, we can define the (unregularized) network loss as $\mathcal{L}(\Theta) = \frac{1}{n}\sum_{i=1}^{n}\ell\bigl(y(\Theta,x_i),y_i\bigr)$.

\begin{wrapfigure}[18]{r}{0.48\textwidth} 
\vspace*{-10mm}
\scalebox{0.87}{
\begin{minipage}{1.25\linewidth}
\begin{algorithm}[H]
\caption{Low-Rank Training via $\text{Adam}\QER$}
 \label{alg:train_adamq3r_final}
\begin{algorithmic}[1]
\REQUIRE Minibatch size $B$, reweighting period $T$, $\QER$ parameter $\lambda$, learning rate $\alpha = 0.001$, $\beta_1 = 0.9$, $\beta_2 =0.999$, $\delta = 10^{-8}$, $\eta = 3$, target rank $r_{\text{target}}$.
\STATE Initialize parameter $\mathbf{W}_0$, $\epsilon_0$ and reweighting operator $\RO{\f{W}_0}{\epsilon_0}$
\FOR{$t = 0,1, \ldots$}
  \IF{$t \bmod T = 0$} 
    \STATE Update reweighting operator $\mathcal{R}_{\left\lfloor\frac{t}{T}\right\rfloor}(\cdot) := \RO{\f{W}_{t}}{\epsilon_t}(\cdot)$ and $\epsilon_t$ \hfill $\triangleright$ \textsc{Use \Cref{algo:updateR}}
  \ENDIF
  \STATE Sample minibatch $S = \{(x_i,y_i)\}_{i=1}^B$
  
    \STATE $\f{g}_{t+1} \gets \nabla_{\f{W}} \mathcal{L}_{S}(\f{W}_{t})$ \hfill $\triangleright$ \textsc{Compute batch \hspace*{50mm} gradient of $\mathcal{L}$} \vspace*{-7mm}
    \STATE $\f{m}_{t+1} \gets \beta_1 \f{m}_{t} + (1 - \beta_1) \f{g}_{t+1}$
    \STATE $\f{v}_{t+1} \gets \beta_2 \f{v}_{t} + (1 - \beta_2) \f{g}_{t+1}^2$
    \STATE $\hat{\f{m}}_{t+1} \gets \f{m}_{t+1} / (1 - \beta_1^{t+1})$
    \STATE $\hat{\f{v}}_{t+1}\gets \f{v}_{t+1} / (1 - \beta_2^{t+1})$
 
    \STATE $\f{R}_{t} \gets \mathcal{R}_{\left\lfloor\frac{t}{T}\right\rfloor}(\f{W}_{t})$ \hfill $\triangleright$ \textsc{Compute $\QER$ gradient \cref{eq:efficient:RO:image}}
    \STATE $\f{W}_{t+1} \gets \f{W}_{t} - \eta \Big( \dfrac{\alpha \hat{\mathbf{m}}_{t+1}}{\sqrt{\hat{\mathbf{v}}_{t+1}} + \delta} +\lambda \f{R}_{t} \Big)$
\ENDFOR
\STATE \textbf{return} $\mathbf{W}_{t}$
\end{algorithmic}
\end{algorithm}
\end{minipage}
}
\vspace*{-40mm}
\end{wrapfigure}

In order to gradually impose low-rank weights during training, we propose to optimize instead the $\QER$-regularized total loss
\begin{equation}\label{eq:training_weightsum}
\mathcal{L}_{\QER}(\Theta)\coloneqq \mathcal{L}(\Theta) + \lambda\!\!\! \!\!\!\!\! \sum_{\substack{k=1\\ \text{$\QER$ is active for }\f{W}_k}}^{K} \!\!\!\!\!\!\!\!\!\mathQER{\f{W}_k^{'}}{\epsilon_k}(\f{W}_k),
\end{equation}

where $\lambda > 0$ is a regularization parameter and  the $\mathQER{\f{W}_k^{'}}{\epsilon_k}(\cdot)$ are as in \cref{def:q3r_val}; the $\{\f{W}_k^{'}\}$ are initially set to the initialization weights, and $\epsilon_k = \infty$ for each $k =1,\ldots, K$. We observe that due to the definition of $\QER$, the gradient with respect to the regularizer terms can be computed as $\nabla _{\f{W}_k} \mathQER{\f{W}_k^{'}}{\epsilon_k}(\f{W}_k) = \RO{\f{W}_k^{'}}{\epsilon_k}(\f{W}_k)$ for each $k$, i.e., by computing the image of $\f{W}_k$ with respect to the reweighting operator $\RO{\f{W}_k^{'}}{\epsilon_k}(\cdot)$ of \Cref{def:optimalweightoperator}. \\
The $\QER$-regularized loss $\mathcal{L}_{\QER}$ can now be used in conjunction with any optimizer suitable for the neural network architecture such as minibatch stochastic gradient or Adam \cite{kingma2017adammethodstochasticoptimization}. 
To ensure that the quadratic models underlying $\QER$ match the $\epsilon_k$-smoothed log-determinant rank surrogates $F_{\epsilon_k}$ \cref{eq:smoothing:Fpeps} closely, we update the reweighting operators $\RO{\f{W}_k^{'}}{\epsilon_k}(\cdot)$ via \Cref{algo:updateR} on a fixed iteration schedule of every $T$ training iterations--we call this parameter $T$ the \emph{reweighting period}.  

However, instead of using a generic adaptive gradient optimizer such as Adam on $\mathcal{L}_{\QER}$, we observe that the $\QER$ terms already possess accurate second-order information of underlying regularization surrogate, which means that including the $\QER$ terms into the adaptive part of an Adam-like optimizer is likely to be suboptimal. For this reason, we propose to use a dedicated adaptive optimizer to optimize $\mathcal{L}_{\QER}$, dubbed \emph{$\text{Adam}\QER$}, which is detailed in \Cref{alg:train_adamq3r_final}. Adam$\QER$ extends the observation of AdamW \cite{loshchilov2018decoupled} that a decoupling of regularization term (in that case, squared Frobenius norm regularization) and network loss improves generalization performance to $\QER$ regularization, avoiding a distortion of the loss landscape. A validation of the benefits of using Adam$\QER$ vs. standard Adam applied to $\mathcal{L}_{\QER}$ can be found in the supplementary material. 

\vspace{-4mm}
\paragraph{Computational Aspects.} Following the low-rank training framework of $\QER$, for example, via $\text{Adam}\QER$, introduces a limited computational overhead compared to unregularized deep learning. Every $T$ training iterations, a truncated singular value decomposition of order $r_{\text{env}}$ (see \Cref{algo:updateR}) of each weight matrix $\f{W}_i \in \Rdd$ to which $\QER$ is applied is required, which has a time complexity of $O(d_1 d_2 r_{\text{env}} + (d_1 + d_2) r_{\text{env}}^2)$ \cite{Halko2011finding}. Similarly, calculating a $\QER$ gradient $\f{R}_t$ in \Cref{alg:train_adamq3r_final} imposes a total cost of $O\big(d_1 d_2 r_{\text{env}} + (d_1 + d_2) r_{\text{env}}^2 + r_{\text{env}}^3\big)$. Since the smoothing parameter update rule \cref{eq:epsilon:update} is designed to relate $r_{\text{env}}$ with the target rank $r_{\text{target}}$ such that $r_{\text{env}} \approx r_{\text{target}}$, the additional time complexity is somewhat proportional to the target rank. To obtain significant parameter reductions in the trained network weight matrices, it is chosen such that $r_{\text{target}} \ll \min(d_1, d_2)$, which limits the computational overhead of $\QER$ in the most interesting use cases. Additional memory requirements amount to $r_{\text{env}} (d_1 +d_2+1)$ per weight matrix as the reweighting operator information needs to be stored via $\f{U},\f{V}$ and $\Sigma$.
\vspace{-4mm}
\section{Experiments} \label{sec:experiments} \vspace{-2mm}
We explore the ability of $\QER$ to obtain favorable trade-offs between model performance and parameter-efficiency across diverse architectures and data distributions experimentally. To this end, we compare different low-rank training methodologies across a range of architecture-dataset pairs: we pre-train ViT-Tiny \cite{vit,howtotrainvit} on CIFAR-10, ViT-Base \cite{vit} on CIFAR-100, followed by post-training low-rank truncation \cite{lorita}; further, we fine-tune BERT-Large \cite{bert} on GLUE benchmark tasks (without truncation).
\vspace{-2mm}
\subsection{Low-Rank Pre-Training}
We compare the accuracy of models trained by $\QER$ against baselines LoRITa \cite{lorita}, LoRA \cite{hu2021lora}, and a model trained without low-rank regularization, after post-training truncation. After training, we truncate each layer weight matrix $\f{W}_k\in \Theta$ using a truncated SVD to obtain factor matrices $\f{A},\f{B}$ with inner dimension $r$ and $\f{W}_k \approx \f{A}\f{B}$ for a range of ranks $r$ corresponding to different parameter retention percentages $p$. Depending on the experiment, we apply low-rank regularization to different subsets of weights $\{\f{W}_k\}$.

\begin{table}[H]
  \centering
  \begingroup
  \small
  \setlength{\tabcolsep}{4pt}
  \renewcommand{\arraystretch}{0.9}
  \scalebox{0.9}{
  \begin{tabular}{lccccccc}
    \toprule
    \textbf{Model}\textbackslash \textbf{Parameter Retention $p$} & \textbf{5\%} & \textbf{10\%} & \textbf{15\%} & \textbf{20\%} & \textbf{30\%} & \textbf{40\%} & \textbf{100\%} \\
    \midrule
    Vanilla ViT-T & 0.1475 & 0.1252 & 0.1350 & 0.1213 & 0.1624 & 0.1524 & 0.6840 \\
    \midrule
    $\text{LoRA}$ & - & 0.3546 & 0.3655 & 0.3576 & - & - & - \\
    \midrule
    LoRITa D=2, $\lambda$=$10^{-1}$   & 0.0989 & 0.1433 & 0.1543 & 0.2125 & 0.3247 & 0.4523 & 0.7142 \\
    LoRITa D=3, $\lambda$=$10^{-1}$   & 0.2258 & 0.3861 & 0.4466 & 0.5035 & 0.6368 & 0.6740 & 0.7273 \\
    LoRITa D=3, $\lambda$=$10^{-3}$ & 0.1338 & 0.2136 & 0.3839 & 0.4560 & 0.5973 & 0.6253 & \textbf{0.7449} \\
    \midrule
    Q3R, $r_{\text{target}}=10\%$, $\lambda$=$10^{-3}$  & \textbf{0.2322} & 0.4085 & 0.5606 & 0.6295 & 0.6526 & 0.6654 & 0.6843 \\
    Q3R, $r_{\text{target}}=15\%$, $\lambda$=$10^{-3}$ & 0.1796 & \textbf{0.4758} & 0.5883 & 0.6215 & 0.6455 & 0.6555 & 0.6737 \\
    Q3R, $r_{\text{target}}=20\%$, $\lambda$=$10^{-3}$ & 0.1998 & 0.4737 & \textbf{0.6175} & 0.6511 & \textbf{0.6749} & \textbf{0.6833} & 0.6990 \\
    Q3R, $r_{\text{target}}=10\%$, $\lambda$=$10^{-2}$  & 0.2041 & 0.4387 & 0.6115 & 0.6449 & 0.6707 & 0.6771 & 0.6889 \\
    Q3R, $r_{\text{target}}=15\%$, $\lambda$=$10^{-2}$  & 0.1313 & 0.3896 & 0.6158 & \textbf{0.6550} & 0.6689 & 0.6801 & 0.6982 \\
    Q3R, $r_{\text{target}}=20\%$, $\lambda$=$10^{-2}$  & 0.1870 & 0.4335 & 0.6123 & 0.6496 & 0.6744 & 0.6868 & 0.6962 \\
    \bottomrule
  \end{tabular}}
  \endgroup
  \hspace*{3mm}
\caption{MLP truncation performance of ViT-T, rank regularization is applied to \emph{both} attention (QKV) blocks and MLP blocks. For $\text{LoRA}$, factor ranks are adaptive to $p$.}
\label{tab:vit_tiny_trunc}
\end{table}

\paragraph{ViT-Tiny Trained on CIFAR-10.} We train ViT-Tiny on CIFAR-10 for 100 epochs using a learning rate of $\alpha = 0.00004$. We enable low-rank training for all Transformer blocks, accounting for ~96\% of ViT-Tiny's parameters. We conduct a hyperparameter sweep across different configurations, and \cref{fig:upperbounds_mlp} shows the best performance achieved by each training method when rank regularization is applied to the MLP and QKV weights. From \cref{tab:vit_tiny_trunc}, we find that Adam$\QER$ retains 42.4\% of the original parameters with only a 1.22\% performance drop, and retains 23.2\% of parameters with a 4.4\% performance drop, while LoRITa consistently underperforms in comparison. As shown in \cref{fig:lorita_mlp}, despite various hyperparameter configurations $\lambda$ and $d$ for LoRITa, Adam$\QER$ consistently outperforms LoRITa upon truncation. 

\paragraph{ViT-Base Trained on CIFAR-100.}
To demonstrate the performance of the low-rank pre-training methods on a more challenging dataset and larger model, we train the $86M$ parameter ViT-Base from scratch for 100 epochs on CIFAR-100 with data augmentation and $\alpha=0.0001$ \cite{howtotrainvit}. In line with practice for large-scale Transformers \cite{liu2024-deepseek}, we apply low-rank training techniques solely to the multi-head self-attention blocks (QKV, but not to the MLP blocks). Despite the additional size and complexity of ViT-Base compared to ViT-Tiny, $\QER$ remains robust and we see in \Cref{fig:cifar100_upperbounds,fig:cifar100_variants} that it exhibits larger performance advantages with $0.40$-$0.44$ test accuracy when $20\%$ parameters retained, whereas LoRITa models do not exceed an accuracy of $0.25$ at the same truncation level despite their substantial overparameterization. \Cref{fig:cifar100_upperbounds,fig:cifar100_variants} are in reference to the \Cref{tab:cifar100_vitb_trunc} in the supplementary material.

\begin{figure}[htbp]
  \centering
  \begin{subfigure}[t]{0.48\textwidth}
    \centering
    \includegraphics[height=0.25\textheight, keepaspectratio]{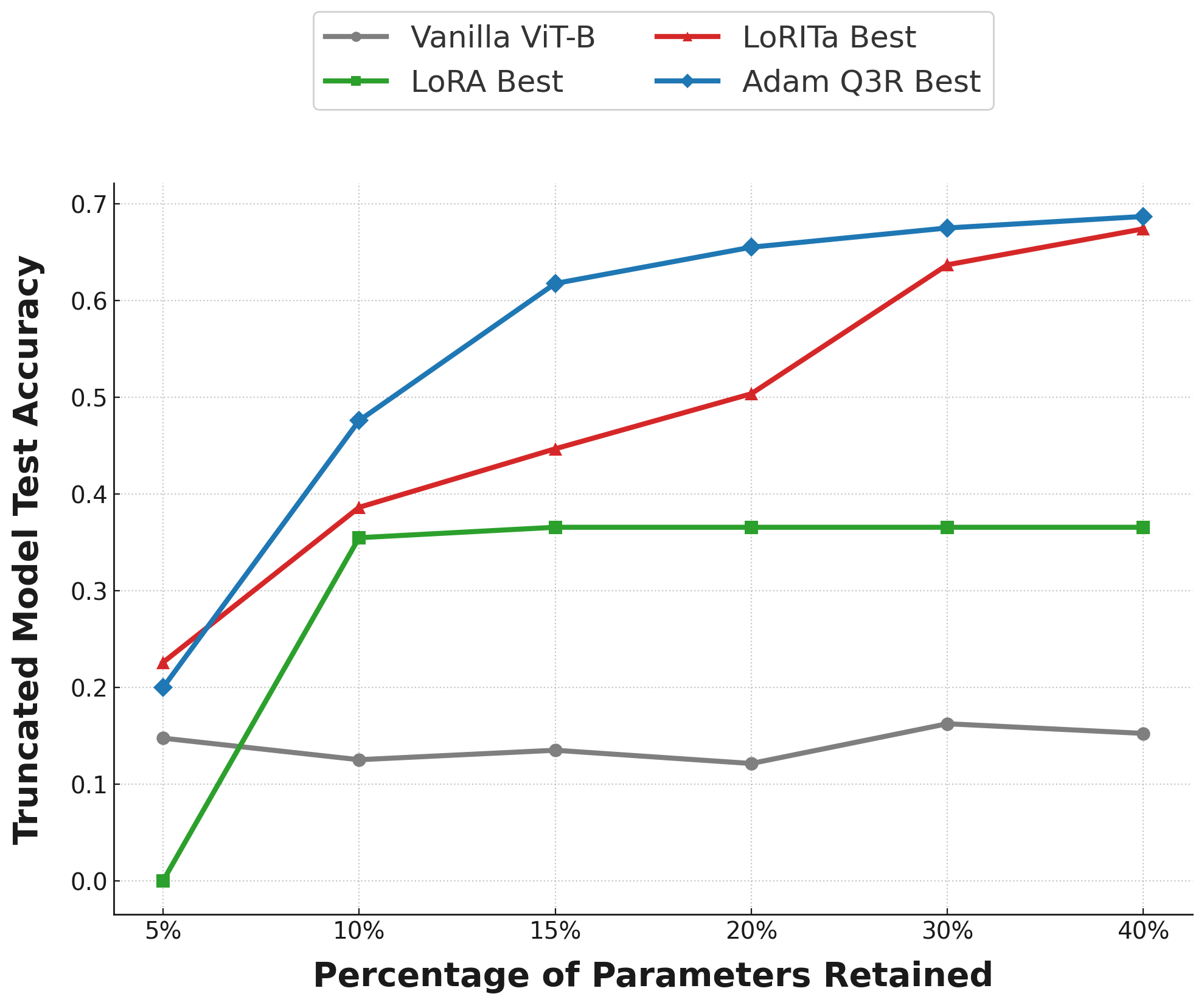}
    \caption{Upper Bounds Comparison}
    \label{fig:upperbounds_mlp}
  \end{subfigure}
  \begin{subfigure}[t]{0.48\textwidth}
    \centering
    \includegraphics[height=0.25\textheight, keepaspectratio]{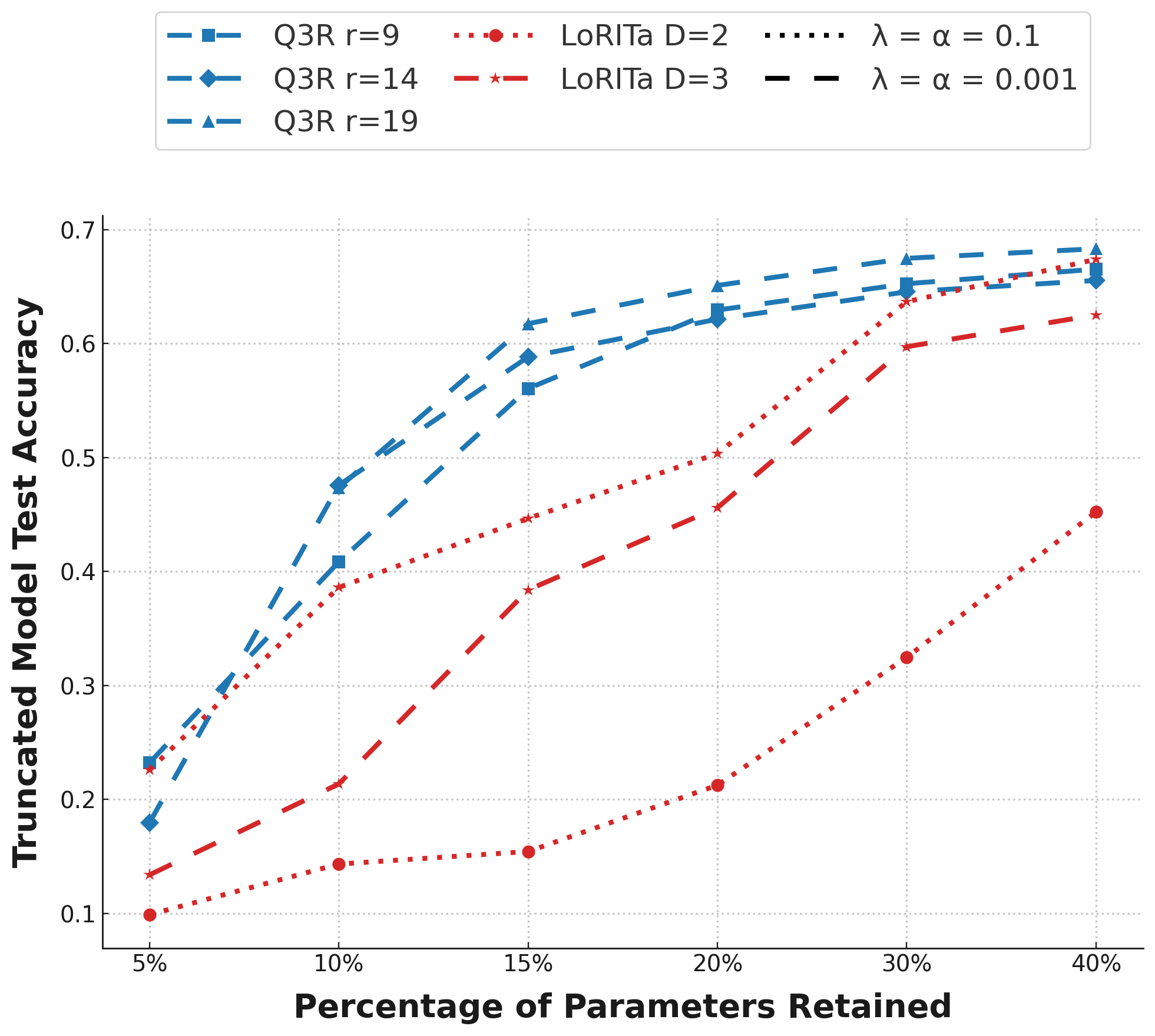}
    \caption{AdamQ3R vs. LoRITa Variants}
    \label{fig:lorita_mlp}
  \end{subfigure}
  \caption{Performance curves on CIFAR-10 with rank regularization applied to MLP and QKV blocks: (a) Best performance across methods, (b) AdamQ3R vs. LoRITa variants.}
  \label{fig:cifar10_mlpqkv_row}
\end{figure}

\begin{figure}[htbp]
  \centering
  \begin{subfigure}[t]{0.48\textwidth}
    \centering
    \includegraphics[height=0.22\textheight, keepaspectratio]{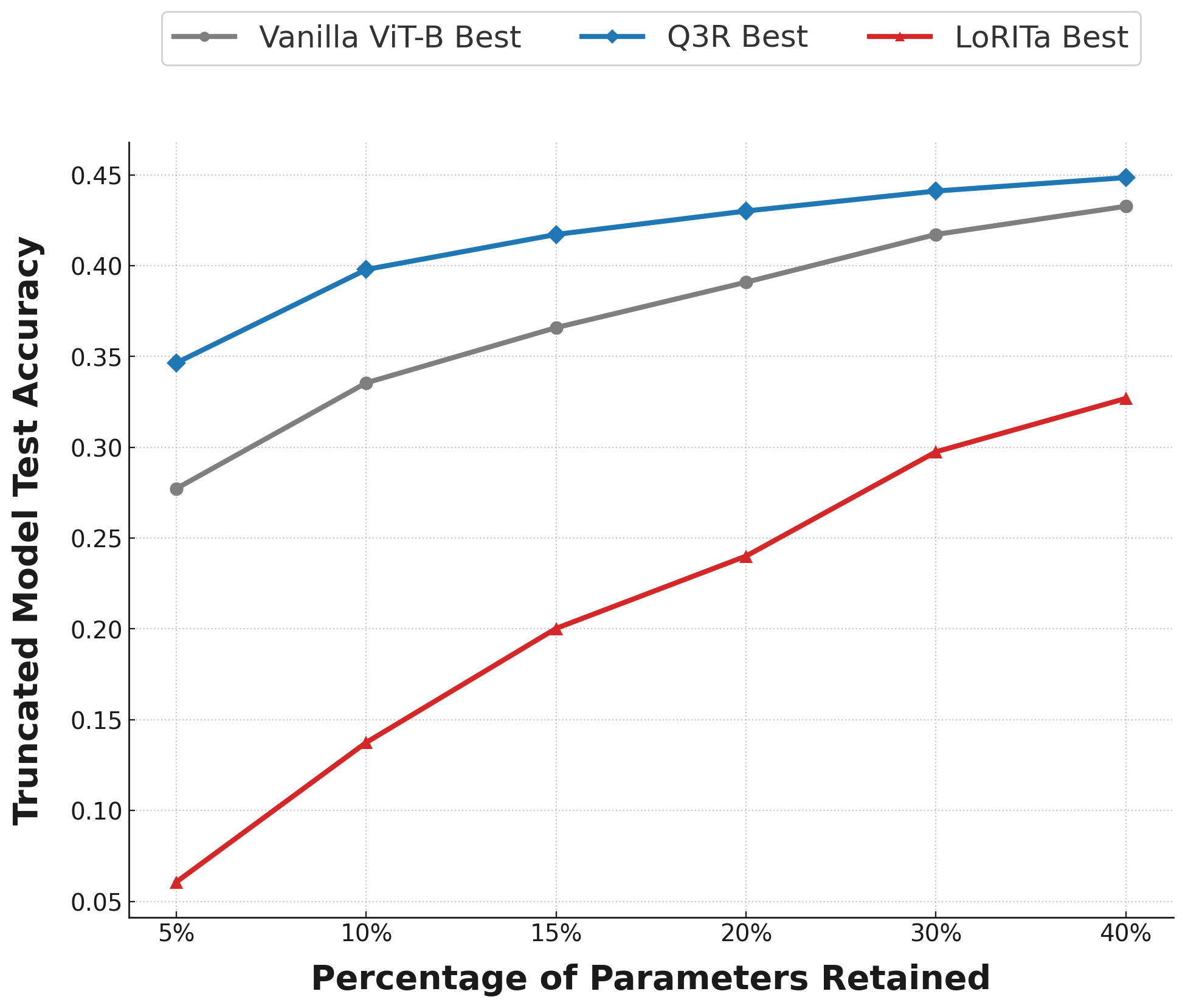}
    \caption{Upper Bounds Comparison}
    \label{fig:cifar100_upperbounds}
  \end{subfigure}%
  \begin{subfigure}[t]{0.48\textwidth}
    \centering
    \includegraphics[height=0.22\textheight, keepaspectratio]{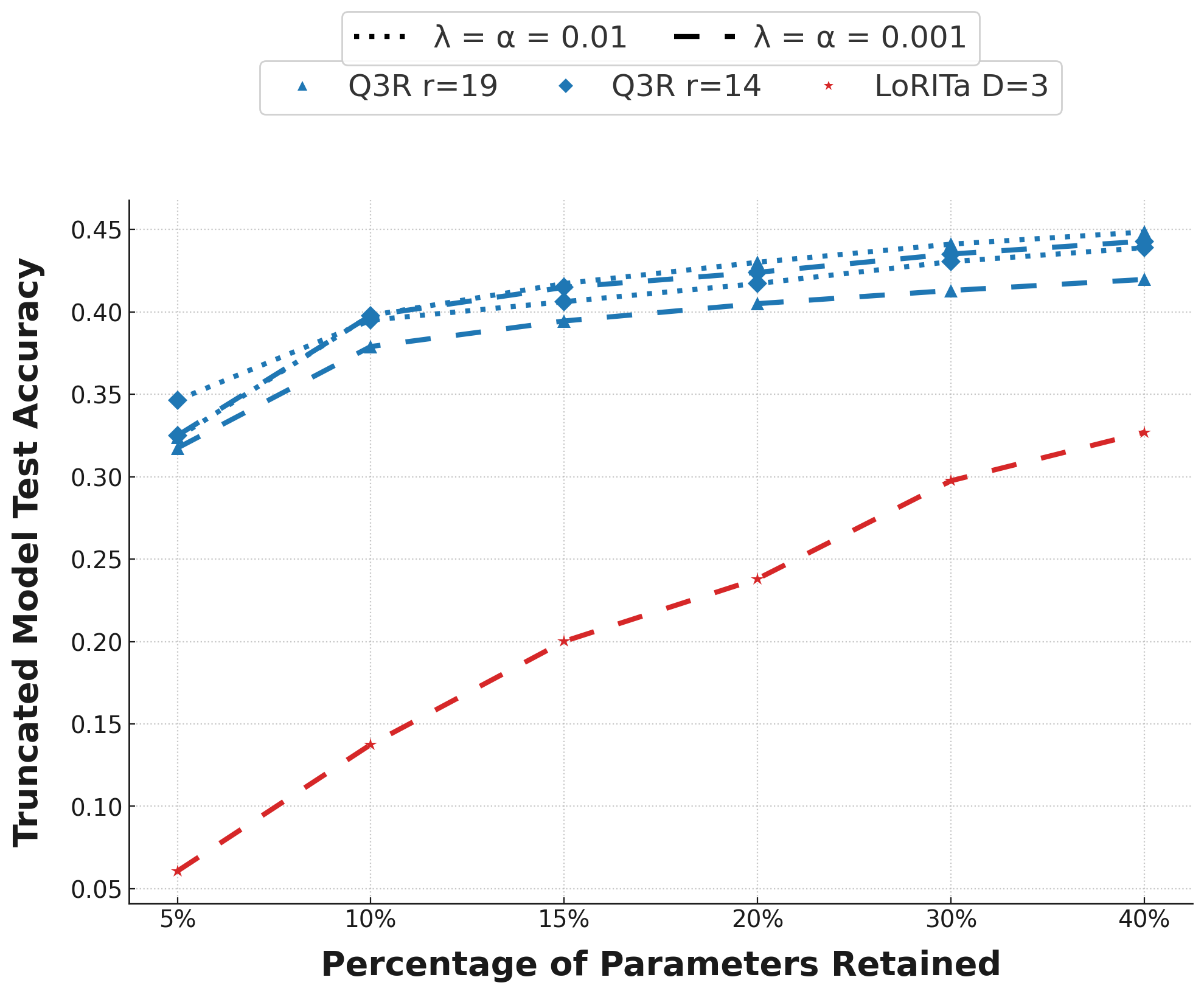}
    \caption{AdamQ3R vs. LoRITa Variants}
    \label{fig:cifar100_variants}
  \end{subfigure}
  \caption{Performance curves on CIFAR-100 with rank regularization applied to QKV blocks: (a) Best performance across methods, (b) AdamQ3R vs. LoRITa variants.}
  \label{fig:cifar100_plots_row}
\end{figure}

\paragraph{ViT-Tiny with Low-Rank Attention Weights.} 
We train ViT-Tiny for 100 epochs on CIFAR-10 \cite{cifar10} from scratch with learning rate $\alpha = 0.0004$, with low-rank regularization applied only to attention weights. We evaluate the methods for a larger set of hyperparameters as shown in \Cref{fig:lorita_qkv} using layer-wise truncation levels with retained parameter percentages $p \in \{5\%, 10\%, 15\%, 20\%,30\%,40\%,50\%,60\%,70\%,80\%,90\%\}$, and present results in \Cref{fig:cifar10_qkv_row}. \Cref{fig:upperbounds_qkv} shows that $\QER$ experiences almost no performance drop up to $p=30\%$ for most parameter choices, exceeding the performance of reference methods. \Cref{fig:lorita_qkv} shows that the worst performing $\QER$ model still outperforms any LoRA, LoRITa, or vanilla ViT-Tiny below $p=60\%$, showcasing the method's robustness. \Cref{fig:cifar10_qkv_row,fig:upperbounds_qkv} are based on \Cref{tab:variant_performance} in the supplementary material.

\begin{figure}[htbp]
  \centering
  \begin{subfigure}[t]{0.48\textwidth}
    \centering
    \includegraphics[height=0.25\textheight, keepaspectratio]{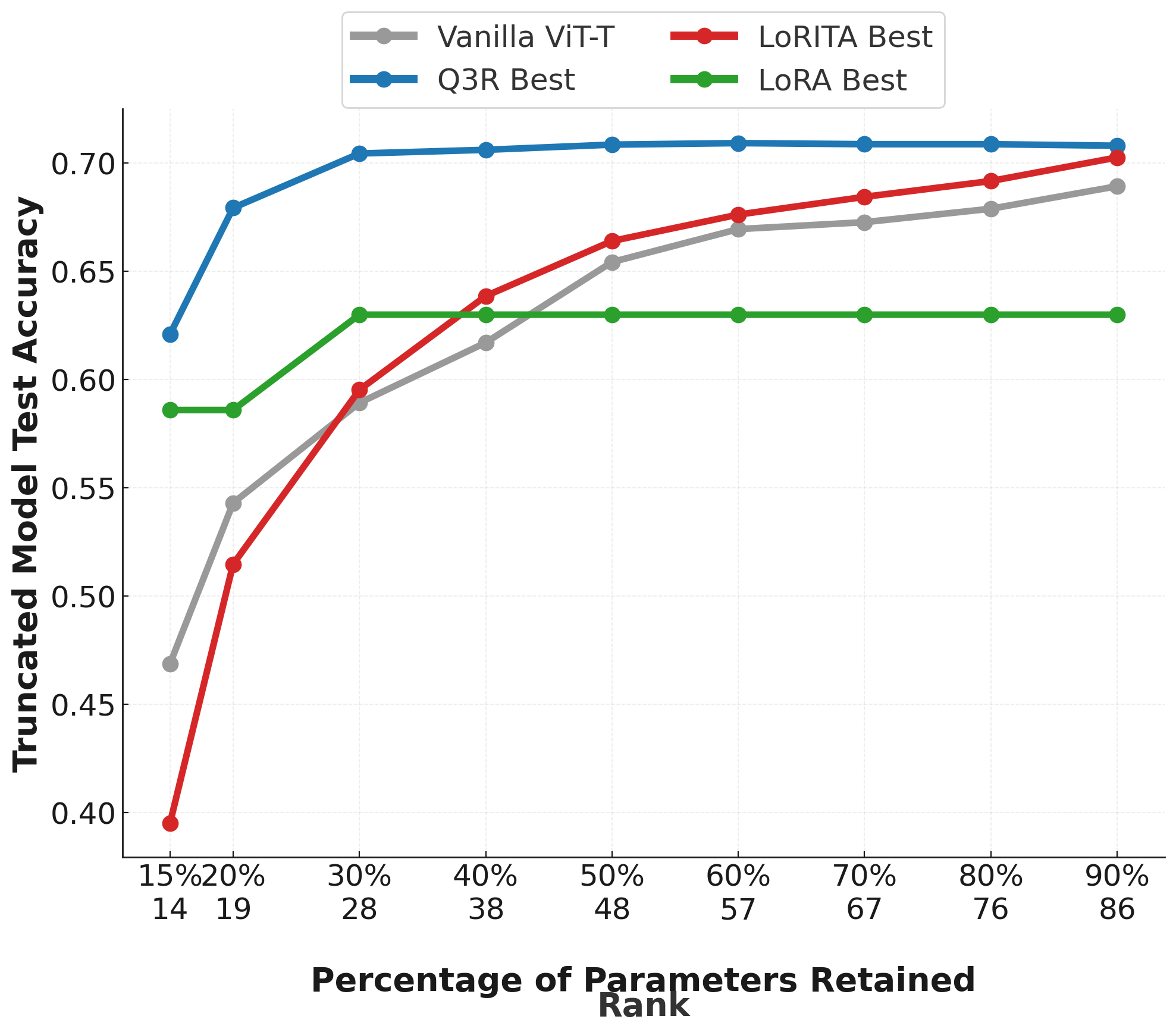}
    \caption{Upper Bounds Comparison}
    \label{fig:upperbounds_qkv}
  \end{subfigure}
  \begin{subfigure}[t]{0.48\textwidth}
    \centering
    \includegraphics[height=0.25\textheight, keepaspectratio]{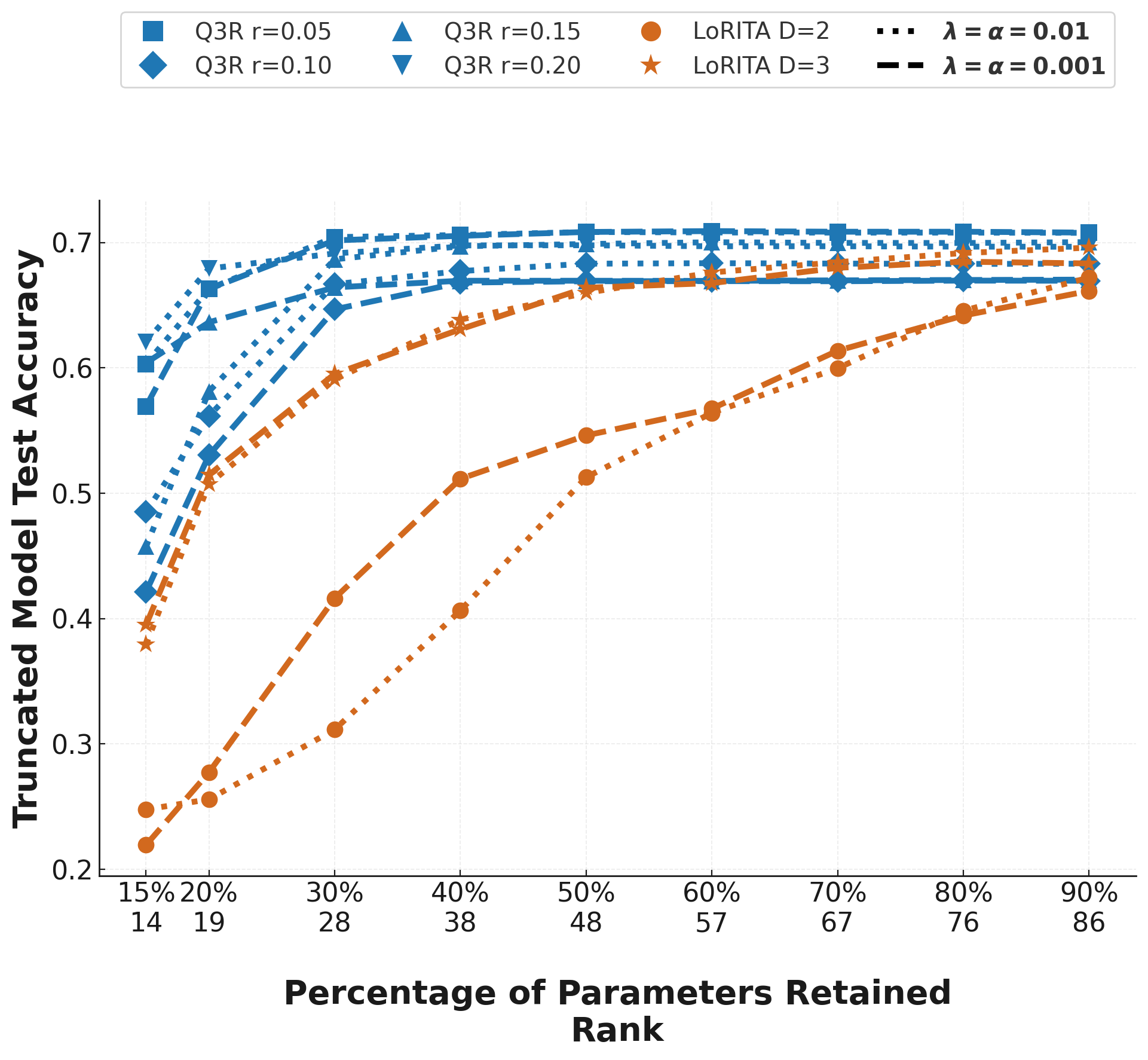}
    \caption{AdamQ3R vs. LoRITa Variants}
    \label{fig:lorita_qkv}
  \end{subfigure}
  \caption{Performance curves on CIFAR-10 with rank regularization applied to QKV blocks: (a) Best performance across methods, (b) AdamQ3R vs. LoRITa variants.}
  \label{fig:cifar10_qkv_row}
\end{figure}

\paragraph{ViT-Base on ImageNet-1k.} We train ViT-Base on ImageNet-1k using Automatic Mixed Precision \citep{mixedprecision} with AutoAugmentation \citep{autoaugment} for 100 epochs. Training is conducted with a learning rate of $\alpha = 4 \times 10^{-5}$, a batch size of 384, and gradient clipping \citep{gradclip} across 4 L40S GPUs. We observe that $\QER$ consistently outperforms the baseline model while utilizing fewer parameters. This performance advantage holds under two truncation paradigms: attention matrices only, and entire Transformer blocks. In both cases, $\QER$ maintains performance comparable to the full baseline model, as seen in \cref{tab:param_retention}.

\begin{table}[htbp]
\centering
\resizebox{0.9\textwidth}{!}{%
\begin{tabular}{lccccccc}
\toprule
\textbf{Optimizer | Transformer Modules} & \textbf{10\%} & \textbf{15\%} & \textbf{20\%} & \textbf{30\%} & \textbf{40\%} & \textbf{50\%} & \textbf{100\%} \\
\midrule
$\text{Adam}\QER$  | QKV, MLP & 0.0138 & \textbf{0.1439 }& \textbf{0.3376 }& \textbf{0.421} & \textbf{0.4458} & \textbf{0.4556} & \textbf{0.5816} \\
Adam | QKV, MLP & \textbf{0.0193} & 0.0976 & 0.2950 & 0.399 & 0.4311 & 0.4523 & 0.5179 \\
$\text{Adam}\QER$ | QKV & 0.1713 & 0.3016 & \textbf{0.4623} & \textbf{0.4895} & \textbf{0.4952} & \textbf{0.4975} & \textbf{0.5816} \\
Adam | QKV & \textbf{0.2552} & \textbf{0.3366} & 0.4551 & 0.4882 & 0.4882 & 0.4937 & 0.5179 \\

\bottomrule
\end{tabular}%
}
\caption{ViT-Base on ImageNet-1k validation accuracy post-truncation on the last epoch}
\label{tab:param_retention}
\end{table}

\subsection{Low-Rank Fine-Tuning}
$\QER$ not only induces a low-rank structure during pre-training in a memory-efficient manner, but also extends naturally to compact fine-tuning. We fine-tune pretrained RoBERTa models on the GLUE benchmark using Adam$\QER$ with the proposed $\QER$ regularizer, and compare against full fine-tuning and LoRA \cite{hu2021lora}.  
We impose $\QER$ on the weight matrices that are added to the full-rank pretrained weight matrices.
For LoRA, we adopt the hyperparameters from \cite{hu2021lora}, and for $\QER$ we cross-validate the learning rate and regularization hyperparameter $\lambda$. As shown in \Cref{tab:glue_scores}, $\QER$ matches or exceeds LoRA’s accuracy on most tasks and exhibits a performance closer to dense fine-tuning. These results demonstrate that $\QER$ can serve as a unified, low-rank training strategy—both for pre-training and fine-tuning of Transformer models. We discuss additional fine-tuning experiments in \Cref{sec:llama}.
\begin{table*}
\centering
\hspace*{-2cm}
\caption{GLUE Benchmark Scores}
\resizebox{\linewidth}{!}{%
\begin{tabular}{@{}lcccccccccc@{}}
\toprule
\textbf{Method} & \textbf{MRPC} & \textbf{RTE} & \textbf{CoLA} & \textbf{STS-B} & \textbf{SST-2} & \textbf{QQP} & \textbf{MNLI} & \textbf{QNLI} & \textbf{Average} \\ \midrule
Dense Fine-tuning & 91.9 & \textbf{77.62} & 62.3 & \textbf{90.19} & 94.04 & 90.2 & \textbf{87.3} & 91.49 & \textbf{85.88} \\
LoRA-4           & 89.04 & 73.55 & 56.25 & 89.86 & \textbf{94.3}  & 90.11 & 87.00 & \textbf{92.5}  & 84.58 \\
\textbf{Q3R-4} & \textbf{92.24 }& 77.23 & \textbf{63.5}  & \textbf{90.1}  & 92.2  & \textbf{91.6}  & 87.2  & 90.2   & \textbf{85.86} \\ \bottomrule
\end{tabular}}%
\hspace{-1cm}
\label{tab:glue_scores}
\end{table*}

\section{Limitations}\label{sec:limit}
While our experiments showcase a robust post-truncation accuracy of $\QER$-trained Transformers on vision and natural language tasks in small-to-medium scale settings that exceeds (or in the case of fine-tuning, matches) the one of other relevant low-rank training paradigms, the viability of $\QER$ is yet to be established across diverse architectures and large-scale problems. Fundamentally, $\QER$ relies on a suitable choice of the regularization strength hyperparameter $\lambda$, as well as on a suitable choice of the target rank $r_{\text{target}}$. We provide ablations about these values in \Cref{sec:hyperparameter:ablations}. While $\QER$ exhibits vulnerability to elevated values of $\lambda$ due to a convergence to a trivial, very low-rank matrix, this is easily detectable by monitoring the tail ratio $T(X,r) = \frac{\sum^r_{i=1}\sigma_i^2}{\|X\|^2_F}$ on models. In practice, we have observed stable behavior within the range $\lambda \in [0.001, 0.01]$. The target rank $r_{\text{target}}$ remains insensitive to underestimation because of the direct computation of epsilon resulting in a large $\epsilon$, and due to the monotonicity of the smoothing parameter update function \cref{eq:epsilon:update}, $\epsilon$ remains within a reasonable bound. We note that for weight matrices and iterations with large $\epsilon$, the effect of $\text{Adam}\QER$ resembles the one of AdamW with weight decay parameter $\lambda$ (see also \cref{def:q3r_val}).

Arguably, a limitation of this work is also the fact that while the final weights after training are (for appropriate parameters) low-rank, $\text{Adam}\QER$ still handles \emph{dense} weight matrix variables throughout training, which does not allow a reduction of the parameter budget \emph{during} training, unlike recent work  \cite{Mo_2025:LORO}. More elaborate post-training post-processing (e.g., inspired by \cite{wang2023cuttlefish}) might lead to further performance improvements.

\section{Conclusion}\label{sec:conclusion}
We introduced \emph{Quadratic Reweighted Rank Regularizer} (\QER), a principled, optimizer-compatible framework for inducing low-rank structure in deep neural network weights through explicit, continuous regularization. By majorizing a smoothed log-determinant surrogate with a quadratic model embedded in the $\text{Adam}\QER$ optimizer, $\QER$ trains weight matrices to achieve target ranks with minimal accuracy loss. This enables model compression with negligible performance degradation under reasonable parameter reductions, decreasing deployment costs and increasing throughput. Our experimental results demonstrate that $\QER$ generalizes across modalities and training regimes, with its design being particularly suitable for low-rank pre-training. Reducing $\QER$'s computational overhead, for example via low-rank subspace projections, remains to future work.

\section*{Acknowledgements}
We would like to thank Tonmoy Hasan and Arkaprava Sinha for their assistance in setting up the LLM experiments. I.G. and C.K acknowledge the support of the NSF Grant CCF-2549926 for this work.

\newpage
\bibliographystyle{alphaabbr}
\bibliography{LRT}

\newpage
\appendix

\section*{Supplementary material for \emph{\titlename}}

In this supplementary material, we first provide theoretical justifications of the relationship between $\QER$ and the smoothed objective, expanding on \Cref{section:Q3R:derivation}, in \Cref{sec:relation}. The derivation of a \QER\ value evaluation algorithm is provided in \Cref{sec:comp}. The expression used in AdamQ3R is derived in \Cref{sec:comp_grad}. In \Cref{sec:more_exp}, we discuss more experimental results in both pre-training and fine-tuning, and we discuss the computational aspects. In the concluding part of this supplementary material, in \Cref{sec:ablation}, we demonstrate the robustness of \QER\  to hyperparameter variation.

\section{Relationship between Smoothed Log-Determinant and \QER}\label{sec:relation}
In this section, we expand on the relationship between the $\epsilon$-smoothed log-determinant surrogate objective $F_{\epsilon}(\cdot)$ defined in \cref{eq:smoothing:Fpeps}. Part of this material is covered in \cite[Section B.2]{KuemmerleMaly-2023Recovering} in a different context.

\subsection{Properties of Smoothed Log-Determinant} \label{sec:logdet:properties}
We focus first on some basic properties of the $\epsilon$-smoothed log-determinant $F_{\epsilon}: \Rdd \to \R$, which, as we recall from \cref{eq:smoothing:Fpeps}, was defined for any $\f{W} \in \Rdd$ as
\begin{equation*}
F_{\epsilon}(\f{W}):= \sum_{i=1}^d f_{\epsilon}(\sigma_i(\f{W})),\text{ where }
f_{\epsilon}(\sigma) =
\begin{cases}
 \epsilon^2 \left(\log(\sigma)- \log(\epsilon)\right) +\frac{1}{2}\epsilon^2, & \text{ if } \sigma \geq \epsilon, \\
  \frac{1}{2} \sigma^2, & \text{ if } \sigma < \epsilon,
 \end{cases}
\end{equation*}
given $\epsilon > 0$.

As seen by its definition, $F_{\epsilon}(\cdot)$ is a \emph{spectral function}, i.e., it only depends on the singular values $\sigma_1(\f{W}),\sigma_2(\f{W}),\ldots$ of $\f{W}$, but not on any singular vector information. 

Let now $d:=\min(d_1,d_2)$. More precisely, we can define, following \cite{Lewis95,Beck17,Lewis05_Nonsm1}, a 
\emph{spectral function} $F: \Rdd \to \R$ as a function for which there exists a function $f: \R^d \to \R$ for which $F=f \circ \sigma$, where 
$\sigma: \Rdd \to \R^d, \W \mapsto \sigma(\W)=(\sigma_1(\W),\ldots,\sigma_d(\W))$	is the function mapping matrices in $\Rdd$ to its singular value vector $\sigma(\W)$. A key towards understanding the derivative structure is that we can obtain an explicit formula for the gradient $\nabla F(\f{W})$ of $F_{\epsilon}$ at $\f{W}$ if the function $f$ in the spectral function definition is absolutely (permutation) symmetric \cite[Section 7.3]{Beck17} according to \Cref{def:absolutelypermutationsym}. It is easy to check that $f_{\epsilon}$ from the definition of the $\epsilon$-smoothed log-determinant $F_{\epsilon}(\cdot)$ satisfies this definition.

\begin{definition}[Absolutely permutation symmetric functions] \label{def:absolutelypermutationsym}
	\begin{enumerate} 
		\item Let $\x \in \R^d$. We call $r(\x) \in \R^d$ the \emph{non-increasing rearrangement} of $\x$ if it holds that
		\[
		r(\x)_1 \geq r(\x)_2 \geq \ldots \geq r(\x)_d
		\]
		and there is a permutation matrix $\f{P} \in \mathbb{P}^d$ such that $r(\x)_i = (\f{P}\x)_i$ for all $i \in [d]$.
		\item We say that a function $f: \R^d \to \R$ is \emph{absolutely permutation symmetric} if 
		\begin{equation} \label{eq:def:abspermsymmetric}
		f(\x) = f(r(|\x|))
		\end{equation}
		for any $\x \in \R^{d}$. 
	\end{enumerate}
\end{definition}

For ease of notation, given a vector $\f{v} \in \R^d$, we define $\dg(\f{v}) \in \Rdd$ to be the rectangular diagonal matrix such that for $v \in \R^d$ and any $i \in \{1,\ldots, d_1\}$, $j \in \{1,\ldots, d_2\}$,
\[
\dg(\f{v})_{ij} = \begin{cases}
	\f{v}_i, & \text{if } i = j, \\
	0, 	& \text{else}.
\end{cases}
\] 

Next, we cite a key result about the differentiability of spectral functions, which is due to \cite[Section 7]{Lewis05_Nonsm1}.

\begin{proposition}[Differentiability of Spectral Functions {\cite[Section 7]{Lewis05_Nonsm1}}] \label{proposition:grad:spectralfct}
	Let $F: \Rdd \to \R$ be a spectral function $F = f \circ \sigma$ with an associated function $f: \R^d \to \R$ that is absolutely permutation symmetric. Then, $F$ is differentiable at $\W \in \Rdd$ if and only if $f$ is differentiable at $\sigma(\W) \in \R^d$. 
	
		In this case, the gradient $\nabla F$ of $F$ at $\W$ is given by 
		\[
			\nabla F(\W) = \f{U} \dg\big(\nabla f(\sigma(\W))\big) \f{V}^\top
		\]
		if $\W = \f{U} \dg\big(\sigma(\W)\big) \f{V}^\top$ is a singular value decomposition of $\W$ with orthogonal matrices $\f{U} \in \R^{d_1 \times d_1}$ and $\f{V} \in \R^{d_2 \times d_2}$.
\end{proposition}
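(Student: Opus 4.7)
The plan is to adapt Lewis's strategy for eigenvalue-spectral functions to the singular-value setting, exploiting the orthogonal invariance of $F$. First I would observe that $F(\f{P}\f{W}\f{Q}^\top) = F(\f{W})$ for all orthogonal $\f{P} \in \R^{d_1 \times d_1}, \f{Q} \in \R^{d_2 \times d_2}$, because $\sigma$ is invariant under such two-sided orthogonal actions. Differentiating this identity at $\f{W}$ (assuming $F$ is differentiable there) forces the equivariance $\nabla F(\f{P}\f{W}\f{Q}^\top) = \f{P}\,\nabla F(\f{W})\,\f{Q}^\top$, so it suffices to establish the gradient formula at the diagonal representative $\dg(\sigma(\f{W}))$. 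An elegant way to streamline the remainder is the Jordan--Wielandt embedding sending $\f{W}$ to the symmetric block matrix with zero diagonal blocks and $\f{W}, \f{W}^\top$ as off-diagonals; its nonzero eigenvalues are $\pm\sigma_i(\f{W})$, and absolute symmetry of $f$ (see \Cref{def:absolutelypermutationsym}) matches exactly the permutation-plus-sign-change invariance required by the symmetric spectral function theorem, reducing the claim to Lewis's symmetric eigenvalue case.

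Alternatively, one can argue directly in two stages. In the nondegenerate case---when $\f{W}$ has distinct and strictly positive singular values---classical perturbation theory (Rellich's theorem) gives local smoothness of both $\sigma(\cdot)$ and the SVD factors $\f{U}(\cdot), \f{V}(\cdot)$, so the ordinary chain rule delivers $\nabla F(\f{W}) = \f{U}\,\dg(\nabla f(\sigma(\f{W})))\,\f{V}^\top$. In the degenerate case---coincident or vanishing singular values---the SVD factors are only determined up to block rotations within the invariant subspaces of repeated singular values and up to sign flips on zero singular values. I would verify that $\f{U}\,\dg(\nabla f(\sigma(\f{W})))\,\f{V}^\top$ is invariant under these ambiguities: absolute symmetry of $f$ forces $\nabla f$ to take equal values on repeated coordinates and to transform correctly under sign flips on zero coordinates, so the expression is independent of the SVD choice. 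Combined with the Lipschitz continuity of $\sigma$ (Weyl's inequality) and a density argument based on the openness of the nondegenerate regime, the formula then extends to every $\f{W}$ at which $f$ is differentiable at $\sigma(\f{W})$. The converse implication follows by restricting $F$ to the affine subspace of matrices sharing the SVD basis of $\f{W}$: there $F(\dg(x)) = f(|x|)$ in coordinates, so any failure of differentiability of $f$ at $\sigma(\f{W})$ translates into a failure of differentiability of $F$ at $\f{W}$.

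The main obstacle is the degenerate case: ensuring that the candidate gradient is well-defined despite SVD non-uniqueness. This is exactly where the absolute symmetry hypothesis becomes indispensable---without it, different legitimate choices of $\f{U}$ and $\f{V}$ would produce different candidate gradients. Handling this carefully, whether through the Jordan--Wielandt reduction (which transparently encodes the sign invariance) or through a direct block-rotation analysis on each degenerate eigenspace, is what separates the argument from a routine chain-rule computation and motivates the restriction to absolutely symmetric $f$ in the hypothesis.
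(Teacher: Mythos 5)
The paper does not supply a proof of this proposition at all --- it is a quoted result, attributed to Lewis and Sendov's work on nonsmooth analysis of singular values, and the argument lives in that reference. Against that backdrop, your first route (orthogonal equivariance plus the Jordan--Wielandt dilation) is exactly the reduction the cited source uses: embed $\f{W}$ into the symmetric matrix $\left(\begin{smallmatrix} 0 & \f{W} \\ \f{W}^\top & 0\end{smallmatrix}\right)$, whose eigenvalues are $\pm\sigma_i(\f{W})$ together with $|d_1-d_2|$ zeros, and invoke Lewis's earlier theorem on differentiability of eigenvalue spectral functions of symmetric matrices. One detail you should not gloss over is the construction of a \emph{permutation}-symmetric function on $\R^{d_1+d_2}$ whose composition with the eigenvalue map reproduces $F$; the absolute symmetry of $f$ is precisely what makes such an extension well defined despite the sign ambiguity and the extra zero eigenvalues. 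Handled carefully, this route is sound and faithful to the source.

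Your ``alternative'' direct argument, however, has a genuine gap at the degenerate points. You correctly establish the formula on the open dense set of matrices with simple, nonzero singular values via Rellich, and you correctly observe that $\f{U}\,\dg(\nabla f(\sigma(\f{W})))\,\f{V}^\top$ is invariant under the block-rotation and sign ambiguities of the SVD, so the candidate gradient is well defined. But well-definedness of a candidate is not the same as differentiability of $F$. The ``density argument'' you invoke would need, at minimum, that $\nabla F$ has a continuous extension across the degenerate stratum and that $F$ is locally Lipschitz (so that one can integrate along line segments); the hypothesis only gives differentiability of $f$ at the single point $\sigma(\W)$, not $C^1$ or Lipschitz behaviour nearby, so neither ingredient is available. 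Indeed, an everywhere-differentiable $f$ can have a discontinuous gradient, in which case the candidate gradient has no limit along nondegenerate sequences converging to $\W$ and the density argument collapses. This is exactly the subtlety that makes Lewis's symmetric-eigenvalue theorem nontrivial rather than a chain-rule-plus-density exercise, and it is why the literature proves the singular-value version by reduction rather than directly. Keep the Jordan--Wielandt route and drop the density shortcut.
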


Using \Cref{proposition:grad:spectralfct}, we can characterize the derivative of the $F_{\epsilon}$ for arbitrary $\epsilon > 0$, as established in the following lemma.
\begin{lemma} \label{lemma:gradient:Feps}
Let $\epsilon > 0 $ and $F_{\epsilon}:\Rdd \to \R$ be the $\epsilon$-smoothed log-determinant of \Cref{eq:smoothing:Fpeps}. Then $F_{\epsilon}$ is differentiable with $1$-Lipschitz gradient $\nabla F_{\epsilon}:\Rdd \to \Rdd$ that is given by
    	\begin{equation} \label{eq:smoothedsurrogate:gradientformula}
    	 \nabla F_{\epsilon}(\W) = \f{U}_{\W} \dg \bigg(\frac{\sigma_i(\W)}{\max(\sigma_i(\W)/\epsilon,1)^{2}}\bigg)_{i=1}^d \f{V}_{\W}^\top
    	\end{equation}
    	for any matrix $\W$ with singular value decomposition $\W = \f{U}_{\W} \dg\big(\sigma(\W)\big) \f{V}_{\W}^\top = \f{U}_{\W} \dg\big(\sigma \big) \f{V}_{\W}^\top$.
\end{lemma}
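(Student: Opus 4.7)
The plan is to reduce the statement to Proposition~\ref{proposition:grad:spectralfct} by verifying its hypotheses for the concrete scalar function $f_{\epsilon}$, and then argue the Lipschitz bound separately through a spectral-function / singular-value-perturbation argument.

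First I would observe that the spectral function $F_{\epsilon}$ is of the form $F_{\epsilon} = f \circ \sigma$ where $f(\x) := \sum_{i=1}^d f_{\epsilon}(|\x_i|)$. Since $f_{\epsilon}$ is defined only on $[0,\infty)$ and then symmetrically extended via $|\cdot|$ (and since singular values are non-negative), $f$ is invariant under sign flips and coordinate permutations, i.e.\ absolutely permutation symmetric in the sense of \Cref{def:absolutelypermutationsym}. This lets me invoke \Cref{proposition:grad:spectralfct}, reducing the whole differentiability question to coordinate-wise differentiability of $f_{\epsilon}$ at $\sigma(\W)$.

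Next I would verify differentiability and compute the derivative of $f_{\epsilon}$ piecewise. On $\sigma>\epsilon$ one has $f_{\epsilon}'(\sigma) = \epsilon^2/\sigma$, on $0 \leq \sigma < \epsilon$ one has $f_{\epsilon}'(\sigma) = \sigma$, and at the seam $\sigma = \epsilon$ both one-sided limits equal $\epsilon$, so $f_{\epsilon}$ is $C^1$ on $[0,\infty)$. Observing that
\[
\frac{\sigma}{\max(\sigma/\epsilon,1)^2} \;=\; \begin{cases} \epsilon^2/\sigma, & \sigma \geq \epsilon,\\ \sigma, & \sigma < \epsilon,\end{cases}
\]
identifies the entries of $\dg(\nabla f(\sigma(\W)))$ with exactly the diagonal entries claimed in \cref{eq:smoothedsurrogate:gradientformula}, and \Cref{proposition:grad:spectralfct} immediately yields the asserted formula $\nabla F_{\epsilon}(\W) = \f{U}_{\W}\dg(\,\cdot\,)\f{V}_{\W}^{\top}$.

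The main obstacle is the $1$-Lipschitz bound on $\nabla F_{\epsilon}$, because one cannot read it off directly from $f_{\epsilon}'$ without some care about the singular vector factors that appear in the formula. My plan is to first check that the scalar map $g_{\epsilon}(\sigma) := \sigma/\max(\sigma/\epsilon,1)^2$ is $1$-Lipschitz on $[0,\infty)$: its derivative equals $1$ on $[0,\epsilon)$ and $-\epsilon^2/\sigma^2 \in [-1,0)$ on $(\epsilon,\infty)$, so $|g_{\epsilon}'| \leq 1$ everywhere. Then I would invoke the standard fact for absolutely permutation symmetric spectral functions that if the underlying scalar derivative is $L$-Lipschitz, the gradient of the associated spectral function on $\Rdd$ is $L$-Lipschitz with respect to the Frobenius norm; this follows from Lewis-type results combined with the Mirsky/Hoffman--Wielandt inequality $\sum_i (\sigma_i(\W_1) - \sigma_i(\W_2))^2 \leq \|\W_1 - \W_2\|_F^2$, together with the von Neumann trace inequality applied to $\langle \nabla F_{\epsilon}(\W_1) - \nabla F_{\epsilon}(\W_2), \cdot\rangle$. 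Since this step relies on a non-trivial citation, I would either appeal to the relevant result in \cite{Lewis05_Nonsm1} or write out a short direct argument bounding $\|\nabla F_{\epsilon}(\W_1) - \nabla F_{\epsilon}(\W_2)\|_F^2$ using the Mirsky inequality and the uniform bound $|g_{\epsilon}'|\le 1$, which is where the bulk of the technical work lies.
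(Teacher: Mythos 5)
Your proposal for the differentiability part and the gradient formula takes essentially the same route as the paper: you reduce the question to coordinate-wise differentiability of $f_{\epsilon}$ via \Cref{proposition:grad:spectralfct}, check the one-sided derivatives match at $\sigma=\epsilon$, and read off the diagonal entries. That portion matches the paper's argument.

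Where you genuinely diverge is the $1$-Lipschitz claim. The paper \emph{states} the Lipschitz property in the lemma but its proof stops after establishing the gradient formula; it offers no argument for the Lipschitz bound at all. You, on the other hand, sketch a complete argument: first verify $|g_{\epsilon}'|\le 1$ so that the scalar gradient is $1$-Lipschitz, then appeal to a transfer principle for spectral functions (that a Lipschitz gradient of the absolutely symmetric $f$ on $\R^d$ implies an equally Lipschitz gradient of $F=f\circ\sigma$ on $\Rdd$, via the Mirsky/Hoffman--Wielandt and von Neumann inequalities, or the symmetric dilation of $\W$ reducing singular values to eigenvalues). This is the correct way to close the gap and is strictly more than the paper provides. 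One small caution: what you need is Lipschitzness of the \emph{multivariate} $\nabla f$ with $f(\x)=\sum_i f_{\epsilon}(|x_i|)$, not just of the scalar $g_{\epsilon}$; this does follow here since $f$ is separable and the one-dimensional map $x\mapsto\sgn(x)\,g_{\epsilon}(|x|)$ is $C^1$ with derivative in $[-1,1]$, but it is worth stating explicitly rather than phrasing the hypothesis as ``the underlying scalar derivative is $L$-Lipschitz,'' which blurs the distinction. Also make sure the citation you invoke actually covers the singular-value (rectangular) case, not just the eigenvalue case on symmetric matrices; if in doubt, the dilation trick makes the reduction explicit.
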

\begin{proof}[{Proof of \Cref{lemma:gradient:Feps}}]
    For the differentiability of $F_{\epsilon}$, as per \Cref{proposition:grad:spectralfct}, it is sufficient to show that the function $f(\begin{pmatrix}
        \sigma_1,\ldots,\sigma_d
    \end{pmatrix}) = \sum_{i=1}^d f_{\epsilon}(\sigma_i)$
with $f_{\epsilon}:\R_{\geq 0} \to \R$ as defined in \cref{eq:smoothing:Fpeps} is differentiable at any $\begin{pmatrix}
        \sigma_1,\ldots,\sigma_d
    \end{pmatrix} \in \R_{\geq 0}^d$. Due to the sum structure of $f$, this will follow if $f_{\epsilon}$ is itself differentiable at any $\sigma \geq 0$.

To this, we observe that for any $\sigma > 0, \sigma \neq \epsilon$, we have that $f_{\epsilon}$ is differentiable at $\sigma$ with derivative
\[
f_{\epsilon}'(\sigma) = 		\begin{cases}
 	\frac{\epsilon^2}{\sigma}, & \text{ if } \sigma > \epsilon \\
 	\sigma, & \text{ if } 0 \leq \sigma < \epsilon.
 \end{cases}
\]
Since $\lim_{\sigma \nearrow \epsilon} f_{\epsilon}'(\sigma) = \epsilon = \lim_{\sigma \searrow \epsilon} f_{\epsilon}'(\sigma)$, it follows that $f_{\epsilon}$ is also differentiable at $\sigma=\epsilon$ with $f_{\epsilon}'(\epsilon) = \epsilon$, and thus, differentiable on the entirety of its domain. The formula \cref{eq:smoothedsurrogate:gradientformula} follows then directly from \Cref{proposition:grad:spectralfct}.
\end{proof}

\begin{remark}
It is well-known that the optimization of \emph{convex} functions \cite{Boyd04} is from many perspectives less challenging than the optimization of non-convex functions. In \Cref{section:Q3R:derivation}, we have claimed that the $\epsilon$-smoothed log-determinant surrogate is \emph{not} convex. This can indeed be shown directly by invoking \cite[Proposition 6.1]{Lewis05_Nonsm1}, which states that a spectral function $F = f \circ \sigma$ is convex if and only if $f$ is convex. Indeed, it is easy to see that $f_{\epsilon}(\cdot)$ is not convex due to its logarithmic dependence on the input for large inputs, which shows that $F_{\epsilon}(\cdot)$ is not a convex function.
\end{remark}

As mentioned in \Cref{section:Q3R:derivation}, we see from \cref{eq:smoothedsurrogate:gradientformula} that computing $\nabla F_{\epsilon}(\f{W})$ given the matrix $\f{W} \in \Rdd$ indeed would require a \emph{full} singular value decomposition that includes at least $d$ leading singular values. Defining $r(\f{W},\epsilon) := |\{i \in \{1,\ldots,d\}: \sigma_i(\f{W}) > \epsilon\}|$ as in \cref{def:repsW} and  $\Sigma = \diag(\sigma_i(\f{W}))_{i=1}^{r(\f{W},\epsilon)} \in \R^{r(\f{W},\epsilon)\times r(\f{W},\epsilon)}$ as in \cref{eq:efficient:RO:image}, we obtain for $\f{U}$ and $\f{V}$ defined from the $r(\f{W},\epsilon)$ leading columns
of 
$\f{U}_{\f{W}} = \begin{bmatrix}
    \f{U} && \f{U}_{\perp}
\end{bmatrix} \in \R^{d_1 \times d_1}$ and $\f{V}_{\f{W}} = \begin{bmatrix}
    \f{V} && \f{V}_{\perp}
\end{bmatrix} \in \R^{d_2 \times d_2}$ that
\begin{equation} \label{eq:gradFepsW}
\nabla F_{\epsilon}(\W) = \epsilon^2 \f{U} \Sigma^{-1} \f{V}^{\top} + \f{U}_{\perp} \Sigma_{\perp} \f{V}_{\perp}^{\top},
\end{equation}
inserting the formula from \cref{eq:smoothedsurrogate:gradientformula}, with the notation that $\Sigma_{\perp} = \dg\left( \sigma_{i}(\f{W})\right)_{i=r(\f{W},\epsilon)+1}^{d}$. Fundamentally, this is the key reason why a direct inclusion of the smoothed log-determinant objective into a gradient-based optimization algorithm is computationally inefficient.

Finally, we conclude with the observation that $F_{\epsilon}(\cdot)$ becomes \emph{convex} if $\epsilon \gg 0$ is chosen large enough. In particular, it holds for any $\f{W} \in \Rdd$ that
\[
F_{\epsilon}(\f{W}) = \sum_{i=1}^d f_{\epsilon}(\sigma_i(\f{W})) = \sum_{i=1}^d \frac{1}{2} \sigma_i^2(\f{W}) = \frac{1}{2} \|\f{W}\|_F^2
\]
if additionally the largest singular value $\sigma_1(\f{W})$ of $\f{W}$ satisfies $\sigma_1(\f{W}) \leq \epsilon$. Here, we used in the last equality that the Frobenius norm of a matrix is the $\ell_2$-norm if its singular values.

\subsection{The Quadratic Model Function Underlying \QER}
We proceed by justifying the claims made in \Cref{section:Q3R:derivation} about the relationship between the $\epsilon$-smoothed log-determinant $F_{\epsilon}(\cdot)$, the $\QER$-regularizer $\mathQER{\f{W}'}{\epsilon}(\cdot)$, the quadratic model $
Q_{\epsilon}(\cdot \mid \mathbf{W}')$ of \cref{qd_model} and the reweighting operator $\RO{\f{W}'}{\epsilon}(\cdot)$. To this end, we show the first statements of \Cref{lemma:gradient:Feps} that characterize the reweighting operator $\RO{\f{W}'}{\epsilon}(\cdot)$.
\begin{proof}[{Proof of \Cref{lemma:RO:properties}.1}]
Let $\f{W}' \in \Rdd$ be arbitrary with singular value decomposition $\f{W}' = \f{U}' \dg (\sigma_i(\f{W}')) \f{V}'^{\top}$. Recall from \Cref{def:optimalweightoperator} that 
\[
\RO{\f{W}'}{\epsilon}(\f{W})  =  \f{U}' \Sigma_{\epsilon,d_1}^{-1}\f{U}'^\top \f{W} \f{V}' \Sigma_{\epsilon,d_2}^{-1}\f{V}'^\top,
\]
where $\Sigma_{\epsilon,d} = \diag(\max(\sigma_i(\f{W}')/\epsilon,1))_{i=1}^d \in \R^{d \times d}$ for $d \in \{d_1, d_2\}$.

To show that $\RO{\f{W}'}{\epsilon}:\Rdd \to \Rdd$ is a positive definite operator, we consider any $\f{W} \in \Rdd$ such that $\f{W} \neq 0$, which implies that $\|\f{W}\|_F > 0$. Defining $\f{Z}:= \f{U}'^\top \f{W} \f{V}'$, we see that
\[
\begin{split}
\langle \f{W}, \RO{\f{W}'}{\epsilon}(\f{W})\rangle &= \tr\left(\f{W}^\top \RO{\f{W}'}{\epsilon}(\f{W})\right) = \tr\left( \f{W}^\top\f{U}' \Sigma_{\epsilon,d_1}^{-1}\f{U}'^\top \f{W} \f{V}' \Sigma_{\epsilon,d_2}^{-1}\f{V}'^\top \right) \\
&= \tr\left( \f{V}'^\top\f{W}^\top\f{U}' \Sigma_{\epsilon,d_1}^{-1}\f{U}'^\top \f{W} \f{V}' \Sigma_{\epsilon,d_2}^{-1} \right) =  \tr\left( \f{Z}^{\top} \Sigma_{\epsilon,d_1}^{-1}\f{Z} \Sigma_{\epsilon,d_2}^{-1} \right) \\
&= \tr\left( (\Sigma_{\epsilon,d_1}^{-1}\f{Z})^{\top} \f{Z} \Sigma_{\epsilon,d_2}^{-1} \right) = \sum_{i=1}^{d_1} \sum_{j=1}^{d_2} (\Sigma_{\epsilon,d_1}^{-1}\f{Z})_{ij} (\f{Z} \Sigma_{\epsilon,d_2}^{-1})_{ij} \\
&=  \sum_{i=1}^{d_1} \sum_{j=1}^{d_2} \widetilde{\sigma}_i \widetilde{\sigma}_j \f{Z}_{ij}^2
\end{split}
\]
with $\widetilde{\sigma}_i:= \max(\sigma_i(\f{W}')/\epsilon,1)^{-1}$ for $i \in \{1, \ldots, \max(d_1,d_2)\}$, using the cyclicity of the trace in the third equality. Since $\widetilde{\sigma}_i, \widetilde{\sigma}_j > 0$ for all $i,j$ (with convention that $\sigma_i(\f{W}') = 0$ for $i > \min(d_1,d_2)$), we can establish the lower bound
\[
\begin{split}
&\langle \f{W}, \RO{\f{W}'}{\epsilon}(\f{W})\rangle =  \sum_{i=1}^{d_1} \sum_{j=1}^{d_2} \widetilde{\sigma}_i \, \widetilde{\sigma}_j \, \f{Z}_{ij}^2 
\geq \min_{i=1}^{\max(d_1,d_2)} \widetilde{\sigma}_i^2 \sum_{i=1}^{d_1} \sum_{j=1}^{d_2} \f{Z}_{ij}^2 = \min_{i=1}^{\max(d_1,d_2)} \widetilde{\sigma}_i^2 \|\f{Z}\|_F^2 \\
&= \min_{i=1}^{\max(d_1,d_2)} \widetilde{\sigma}_i^2 \|\f{U}'^\top \f{W} \f{V}'\|_F^2  = \min_{i=1}^{\max(d_1,d_2)} \widetilde{\sigma}_i^2 \|\f{W} \|_F^2 > 0.
\end{split}
\]
\end{proof}
Due to the definition of $\QER$~\cref{def:q3r_val}, an implication of this is that
\begin{equation*}
    \mathQER{\f{W}'}{\epsilon}(\mathbf{W}) = \frac{1}{2}\langle \f{W}, \mathcal{R}_{\f{W}',\epsilon}(\f{W})\rangle \geq 0
\end{equation*}
and $\mathQER{\f{W}'}{\epsilon}(\mathbf{W}) = 0  \Leftrightarrow \f{W} = 0$, i.e., the value of $\QER$~ is always non-negative and positive for non-zero matrices.

We proceed with the proof of the second statement of \Cref{lemma:RO:properties}, which provides an explicit formula for the reweighting operator that only requires a partial SVD of $\f{W}'$.
\begin{proof}[{Proof of \Cref{lemma:RO:properties}.2}]
If $\f{W}' = \f{U}_{\f{W}'} \dg (\sigma_i(\f{W}')) \f{V}_{\f{W}'}^{\top}$ is a full singular value decomposition of $\f{W}'$ with  $\f{U}_{\f{W}'} = \begin{bmatrix}
    \f{U} && \f{U}_{\perp}
\end{bmatrix} \in \R^{d_1 \times d_1}$ and $\f{V}_{\f{W}'} = \begin{bmatrix}
    \f{V} && \f{V}_{\perp}
\end{bmatrix} \in \R^{d_2 \times d_2}$, we recall that the image of a matrix $\f{W} \in \Rdd$ with respect to $\RO{\f{W}'}{\epsilon}$ is defined (see \Cref{def:optimalweightoperator}) as
\begin{equation} \label{eq:RO:def:formula}
\RO{\f{W}'}{\epsilon}(\f{W})  =  \f{U}_{\f{W}'} \Sigma_{\epsilon,d_1}^{-1}\f{U}_{\f{W}'}^\top \f{W} \f{V}_{\f{W}'} \Sigma_{\epsilon,d_2}^{-1}\f{V}_{\f{W}'}^\top,
\end{equation}
using the definition for $\Sigma_{\epsilon,d}$ from the proof of \Cref{lemma:RO:properties}.1 above. 

With a similar argument as made in \cref{eq:gradFepsW}, we can see that
\[
\f{U}_{\f{W}'} \Sigma_{\epsilon,d_1}^{-1}\f{U}_{\f{W}'}^\top = \epsilon \f{U} \Sigma^{-1} \f{U}^{\top} + \f{U}_{\perp} \f{U}_{\perp}^{\top} = \epsilon \f{U} \Sigma^{-1} \f{U}^{\top} + \mathbf{I} - \f{U}  \f{U}^{\top}
\]
with $\Sigma = \diag(\sigma_i(\f{W}'))_{i=1}^{r(\f{W}',\epsilon)} \in \R^{r(\f{W}',\epsilon)\times r(\f{W}',\epsilon)}$, $\f{U} \in \R^{d_1 \times r(\f{W}',\epsilon)}$ and the identity matrix $\mathbf{I}$. In the last equation, we used that $\f{U}_{\perp} \f{U}_{\perp}^{\top} $ is the projection operator onto the subspace that is \emph{orthogonal} to the one spanned by the columns of $\f{U}$. Analogously, we obtain that
\[
\f{V}_{\f{W}'} \Sigma_{\epsilon,d_2}^{-1}\f{V}_{\f{W}'}^\top = \epsilon \f{V} \Sigma^{-1} \f{V}^{\top} + \f{V}_{\perp} \f{V}_{\perp}^{\top} = \epsilon \f{V} \Sigma^{-1} \f{V}^{\top} + \mathbf{I} - \f{V}  \f{V}^{\top},
\]
where $\f{V} \in \R^{d_2 \times r(\f{W}',\epsilon)}$. Inserting these two equations into \cref{eq:RO:def:formula}, we obtain
\[
\begin{split}
\RO{\f{W}'}{\epsilon}(\f{W})  &=  \f{U}_{\f{W}'} \Sigma_{\epsilon,d_1}^{-1}\f{U}_{\f{W}'}^\top \f{W} \f{V}_{\f{W}'} \Sigma_{\epsilon,d_2}^{-1}\f{V}_{\f{W}'}^\top = \\
&= \left( \epsilon \f{U} \Sigma^{-1} \f{U}^{\top} + \mathbf{I} - \f{U}  \f{U}^{\top} \right) \f{W} \left( \epsilon \f{V} \Sigma^{-1} \f{V}^{\top} + \mathbf{I} - \f{V}  \f{V}^{\top} \right) \\
&= \epsilon^2 \f{U} \Sigma^{-1} \f{U}^{\top} \f{W} \f{V} \Sigma^{-1} \f{V}^{\top} +  \epsilon  \f{U} \Sigma^{-1} \f{U}^{\top} \f{W}\left(\mathbf{I} - \f{V}  \f{V}^{\top} \right) \\
&+ \epsilon  \left(\mathbf{I} - \f{U}  \f{U}^{\top} \right)  \f{W} \f{V} \Sigma^{-1} \f{V}^{\top} + \left(\mathbf{I} - \f{U}  \f{U}^{\top} \right)  \f{W} \left(\mathbf{I} - \f{V}  \f{V}^{\top} \right),
\end{split}
\]
where the last equality shows the statement of \Cref{lemma:RO:properties}.2.
\end{proof}

As a preparation for the proof of the last statement of \Cref{lemma:RO:properties}, we formulate the following lemma which relates the gradient of $F_{\epsilon}$ at $\f{W}$ with the reweighting operator. 

\begin{lemma}[Gradient Condition] \label{lemma:gradientcondition}
Let $\epsilon > 0$. For any $\f{W} \in \Rdd$, the reweighting operator $\RO{\f{W}}{\epsilon}: \Rdd \to \Rdd$ satisfies
\begin{equation} \label{eq:gradcondition}
\RO{\f{W}}{\epsilon}(\f{W}) = \nabla F_{\epsilon}(\f{W}),
\end{equation}
where $\nabla F_{\epsilon}(\f{W})$ is the gradient of the $\epsilon$-smoothed log-determinant at $\f{W}$.
\end{lemma}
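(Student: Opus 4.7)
The plan is to reduce the claim to a direct algebraic identity by unpacking both sides using a full singular value decomposition of $\f{W}$ and showing that the diagonal parts match entry by entry.

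First, I would fix a full SVD $\f{W} = \f{U}_{\f{W}} \dg(\sigma(\f{W})) \f{V}_{\f{W}}^\top$ with orthogonal $\f{U}_{\f{W}} \in \R^{d_1 \times d_1}$ and $\f{V}_{\f{W}} \in \R^{d_2 \times d_2}$, and invoke \Cref{lemma:gradient:Feps} to write
\[
\nabla F_{\epsilon}(\f{W}) = \f{U}_{\f{W}} \, \dg\!\bigg(\frac{\sigma_i(\f{W})}{\max(\sigma_i(\f{W})/\epsilon,1)^2}\bigg)_{i=1}^d \f{V}_{\f{W}}^\top.
\]
On the other side, by \Cref{def:optimalweightoperator} applied with $\f{W}' = \f{W}$, we have
\[
\RO{\f{W}}{\epsilon}(\f{W}) = \f{U}_{\f{W}} \Sigma_{\epsilon,d_1}^{-1} \f{U}_{\f{W}}^\top \f{W} \f{V}_{\f{W}} \Sigma_{\epsilon,d_2}^{-1} \f{V}_{\f{W}}^\top,
\]
with the square diagonal matrices $\Sigma_{\epsilon,d} = \diag(\max(\sigma_i(\f{W})/\epsilon,1))_{i=1}^d$ for $d\in\{d_1,d_2\}$.

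Second, I would use orthogonality of $\f{U}_{\f{W}}$ and $\f{V}_{\f{W}}$ to simplify the inner part: $\f{U}_{\f{W}}^\top \f{W} \f{V}_{\f{W}} = \dg(\sigma(\f{W}))$, which is a $d_1 \times d_2$ rectangular diagonal matrix. Thus
\[
\RO{\f{W}}{\epsilon}(\f{W}) = \f{U}_{\f{W}} \bigl( \Sigma_{\epsilon,d_1}^{-1} \dg(\sigma(\f{W})) \Sigma_{\epsilon,d_2}^{-1} \bigr) \f{V}_{\f{W}}^\top.
\]
The remaining task is purely diagonal: the product of a square diagonal $\Sigma_{\epsilon,d_1}^{-1}$, a rectangular diagonal $\dg(\sigma(\f{W}))$, and a square diagonal $\Sigma_{\epsilon,d_2}^{-1}$ is again rectangular diagonal, whose $i$-th entry (for $i \leq d = \min(d_1,d_2)$) equals
\[
\frac{1}{\max(\sigma_i(\f{W})/\epsilon,1)} \cdot \sigma_i(\f{W}) \cdot \frac{1}{\max(\sigma_i(\f{W})/\epsilon,1)} = \frac{\sigma_i(\f{W})}{\max(\sigma_i(\f{W})/\epsilon,1)^2}.
\]
Comparing this with the formula for $\nabla F_\epsilon(\f{W})$ established above yields the identity \cref{eq:gradcondition}.

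I do not expect any real obstacle here; the only subtle point is keeping the shapes of the rectangular diagonal matrices straight when $d_1 \neq d_2$, so I would explicitly note that multiplying a square diagonal matrix of dimension $d_1$ on the left and one of dimension $d_2$ on the right by a $d_1 \times d_2$ rectangular diagonal matrix produces a rectangular diagonal matrix whose only non-zero entries are the $i$-th diagonal entries for $i = 1, \ldots, d$, and that these entries multiply entrywise. With this observation, the proof is essentially a one-line spectral computation.
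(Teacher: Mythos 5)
Your proposal is correct and follows exactly the route the paper takes: substitute the SVD of $\f{W}$ into the definition of $\RO{\f{W}}{\epsilon}$, cancel $\f{U}_{\f{W}}^\top\f{U}_{\f{W}}$ and $\f{V}_{\f{W}}^\top\f{V}_{\f{W}}$ by orthogonality, evaluate the resulting diagonal product entrywise, and match it against the gradient formula from \Cref{lemma:gradient:Feps}. The only difference is that you spell out the rectangular-diagonal bookkeeping a bit more explicitly than the paper does, which is fine but not a distinct approach.
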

\begin{proof}
    If $\f{W} = \f{U}_{\f{W}} \dg (\sigma_i(\f{W})) \f{V}_{\f{W}}^{\top}$ is a singular value decomposition of $\f{W}$ with $\f{U}_{\f{W}} = \begin{bmatrix}
    \f{U} && \f{U}_{\perp}
\end{bmatrix} \in \R^{d_1 \times d_1}$ and $\f{V}_{\f{W}} = \begin{bmatrix}
    \f{V} && \f{V}_{\perp}
\end{bmatrix} \in \R^{d_2 \times d_2}$, we observe that
\[
\begin{split}
\RO{\f{W}}{\epsilon}(\f{W}) &= \f{U}_{\f{W}} \Sigma_{\epsilon,d_1}^{-1}\f{U}_{\f{W}}^\top \left( \f{U}_{\f{W}} \dg (\sigma_i(\f{W})) \f{V}_{\f{W}}^{\top}\right) \f{V}_{\f{W}} \Sigma_{\epsilon,d_2}^{-1}\f{V}_{\f{W}}^\top \\
&= \f{U}_{\f{W}} \Sigma_{\epsilon,d_1}^{-1} \dg (\sigma_i(\f{W})) \Sigma_{\epsilon,d_2}^{-1}\f{V}_{\f{W}}^\top  \\
&= \f{U}_{\f{W}} \dg \bigg(\frac{\sigma_i(\W)}{\max(\sigma_i(\W)/\epsilon,1)^{2}}\bigg)_{i=1}^d \f{V}_{\f{W}}^\top = \nabla F_{\epsilon}(\W),
\end{split}
\]
using the gradient formula \cref{lemma:gradient:Feps} in the last equality.
\end{proof}
As a corollary of \Cref{lemma:gradientcondition}, we see that for $\f{W}' = \f{W}$, the gradient of the $\QER$~regularizer satisfies
\[
\nabla \mathQER{\f{W}}{\epsilon}(\f{W}) =  \nabla F_{\epsilon}(\f{W}).
\]
This is a direct implication of \Cref{lemma:gradientcondition} since 
\[
\nabla_{\f{W}} \mathQER{\f{W}'=\f{W}}{\epsilon}(\f{W}) = \nabla_{\f{W}} \left(\frac{1}{2} \langle \f{W},\RO{\f{W}'={\f{W}}}{\epsilon}(\f{W}) \rangle \right) = \RO{\f{W}}{\epsilon}(\f{W})
\]
using the self-adjointness (see, e.g., \cite[Appendix B]{KuemmerleMaly-2023Recovering}) of $\RO{\f{W}}{\epsilon}$. 

The gradient condition \cref{eq:gradcondition} enables us to equate the definition of the quadratic model function \cref{qd_model}
$Q_{\epsilon}(\cdot \mid \mathbf{W}')$ (which is, up to a constant that depends on $\epsilon$ and $\f{W}'$ the same as the value of $\QER$) with the standard quadratic model form of \cref{eq:quadraticmodel:linearterm}.

\begin{proof}[{Proof of \Cref{lemma:RO:properties}.3}]
Let $\f{W}, \f{W}' \in \Rdd$ be arbitrary. To show the equation \cref{eq:quadraticmodel:linearterm}, we start with its right hand side. By inserting \cref{eq:gradcondition}, we obtain
\[
\begin{split}
    &F_{\epsilon}(\mathbf{W}') + \langle \nabla F_{\epsilon}(\mathbf{W}'), \mathbf{W} - \mathbf{W}' \rangle + \tfrac{1}{2}\,\langle \mathbf{W} - \mathbf{W}', \,\mathcal{R}_{\mathbf W',\epsilon}(\mathbf{W} - \mathbf{W}') \rangle \\
    &= F_{\epsilon}(\mathbf{W}') + \langle\RO{\f{W}'}{\epsilon}(\f{W}'),  \mathbf{W} - \mathbf{W}' \rangle + \frac{1}{2}\langle \f{W},  \RO{\f{W}'}{\epsilon}(\mathbf{W}) \rangle -  \langle \RO{\f{W}'}{\epsilon}(\f{W}'),  \mathbf{W} \rangle \\
    &+ \frac{1}{2} \langle \f{W}',  \RO{\f{W}'}{\epsilon}(\f{W}') \rangle \\
    &=  F_{\epsilon}(\mathbf{W}') + \frac{1}{2}\langle \f{W},  \RO{\f{W}'}{\epsilon}(\mathbf{W}) \rangle - \frac{1}{2}  \langle \f{W}',  \RO{\f{W}'}{\epsilon}(\f{W}') \rangle \\
    &=: Q_{\epsilon}(\f{W} \mid \f{W}'),
\end{split}
\]
where we also use the self-adjointness of $\RO{\f{W}'}{\epsilon}(\cdot)$ in the first equality. The last expression corresponds to the definition of the quadratic model $Q_{\epsilon}(\cdot \mid \f{W}')$ of $F_{\epsilon}(\cdot)$ given the expansion point $\f{W}'$. This concludes the proof.
\end{proof}

From this proof, it becomes clear that the quadratic model $Q_{\epsilon}(\cdot \mid \f{W}')$ is a pure quadratic model with vanishing linear term. This implies that, for example, $Q_{\epsilon}(-\f{W}\mid \f{W}') = Q_{\epsilon}(\f{W}\mid \f{W}')$ for all $\f{W}$, which 
reflects the geometry of the smoothed log-determinant $F_{\epsilon}(\cdot)$ better than a mixed quadratic model function as it likewise satisfies $F_{\epsilon}(-\f{W}) = F_{\epsilon}(\f{W})$.

\section{Computation of Regularizer Value of \QER}\label{sec:comp}
In this section, we provide with \Cref{alg:val} below an implementable algorithm for evaluating the $\QER$~regularizer $\mathQER{\f{W}'}{\epsilon}(\mathbf{W})$ of a weight matrix $\mathbf{W}$ given, as defined in \cref{def:q3r_val}, given the reweighting operator $\RO{\f{W}'}{\epsilon}(\cdot)$ of \Cref{def:optimalweightoperator} defined for the matrix $\f{W}'$ with leading left and right singular vector matrices $\f{U} \in \R^{d_1 \times r}$ and $\f{V} \in \R^{d_2 \times r}$ and leading $r$ singular vectors $\sigma \in \R^{r}$, where $r = r(\f{W}',\epsilon)$ as in \eqref{def:repsW}.

We note that strictly speaking, evaluating $\mathQER{\f{W}'}{\epsilon}(\mathbf{W})$ is never necessary in a training scheme such as $\text{Adam}\QER$; however, evaluating $\mathQER{\f{W}'}{\epsilon}(\mathbf{W})$ might be insightful to keep track of the extent of the regularization.

First, we decompose, following its definition \eqref{eq:efficient:RO:image}, the reweighting operator image such that

\[
\mathcal{R}_{\f{W}',\epsilon}(\f{W})
= T_1^\epsilon + T_2^\epsilon + T_3^\epsilon + T_4^\epsilon
\]
with
\[
\begin{aligned}
T_1^\epsilon &= \epsilon^2\,\f{U}\,\f{\Sigma}^{-1}\f{U}^\top\,\f{W}\,\f{V}\,\f{\Sigma}^{-1}\f{V}^\top,\\
T_2^\epsilon &= \epsilon\,\f{U}\,\f{\Sigma}^{-1}\f{U}^\top\,\f{W}\,(\f{I} - \f{V}\f{V}^\top),\\
T_3^\epsilon &= \epsilon\,( \f{I} - \f{U}\f{U}^\top)\,\f{W}\,\f{V}\,\f{\Sigma}^{-1}\f{V}^\top,\\
T_4^\epsilon &= (\f{I} - \f{U}\f{U}^\top)\,\f{W}\,(\f{I} - \f{V}\f{V}^\top).
\end{aligned}
\]
Defining
\[
\begin{aligned}
I_1 &= \bigl\langle \f{W},\,T_1^\epsilon\bigr\rangle
     = \epsilon^2\,\trace\bigl(\f{W}^\top \f{U}\,\f{\Sigma}^{-1}\f{U}^\top \f{W}\,\f{V}\,\f{\Sigma}^{-1}\f{V}^\top\bigr),\\
I_2 &= \bigl\langle \f{W},\,T_2^\epsilon\bigr\rangle
     = \epsilon\,\trace\bigl(\f{W}^\top \f{U}\,\f{\Sigma}^{-1}\f{U}^\top \f{W}\,(\f{I} - \f{V}\f{V}^\top)\bigr),\\
I_3 &= \bigl\langle \f{W},\,T_3^\epsilon\bigr\rangle
     = \epsilon\,\trace\bigl(\f{W}^\top (\f{I} - \f{U}\f{U}^\top)\,\f{W}\,\f{V}\,\f{\Sigma}^{-1}\f{V}^\top\bigr),\\
I_4 &= \bigl\langle \f{W},\,T_4^\epsilon\bigr\rangle
     = \trace\bigl(\f{W}^\top (\f{I} - \f{U}\f{U}^\top)\,\f{W}\,(\f{I} - \f{V}\f{V}^\top)\bigr),
\end{aligned}
\]
and recalling the definition \eqref{def:q3r_val} of $\mathQER{\f{W}'}{\epsilon}(\mathbf{W})$, we can write
\[
\mathQER{\f{W}'}{\epsilon}(\mathbf{W}) = \frac{1}{2} \bigl\langle \f{W},\mathcal{R}_{\f{W}',\epsilon}(\f{W})\bigr\rangle
= \frac{1}{2} \left(I_1 + I_2 + I_3 + I_4 \right).
\]
Below, we outline several steps that lead to an efficient implementation of the computation of the value of $\mathQER{\f{W}'}{\epsilon}(\mathbf{W})$.
\noindent\textbf{Apply cyclicity to \(I_1,I_2,I_3\).}
\begin{align*}
I_1
&= \epsilon^2\,
  \mathrm{tr}\bigl(\f{V}^\top \f{W}^\top \f{U}\,\f{\Sigma}^{-1}\f{U}^\top \f{W}\,\f{V}\,\f{\Sigma}^{-1}\bigr),\\
I_2
&= \epsilon\,
  \Bigl[\,
    \mathrm{tr}\bigl(\f{W}^\top \f{U}\,\f{\Sigma}^{-1}\f{U}^\top \f{W}\bigr)
    \;-\;\mathrm{tr}\bigl(\f{W}^\top \f{U}\,\f{\Sigma}^{-1}\f{U}^\top \f{W}\,\f{V}\f{V}^\top\bigr)
  \Bigr],\\
I_3
&= \epsilon\,
  \Bigl[\,
    \mathrm{tr}\bigl(\f{W}^\top \f{W}\,\f{V}\,\f{\Sigma}^{-1}\f{V}^\top\bigr)
    \;-\;\mathrm{tr}\bigl(\f{W}^\top \f{U}\f{U}^\top \f{W}\,\f{V}\,\f{\Sigma}^{-1}\f{V}^\top\bigr)
  \Bigr].
\end{align*}

\medskip

\noindent\textbf{Expand \(I_4\).}
\[
I_4
= \mathrm{tr}\bigl(\f{W}^\top \f{W}\bigr)
  \;-\;\mathrm{tr}\bigl(\f{W}^\top \f{W}\,\f{V}\,\f{V}^\top\bigr)
  \;-\;\mathrm{tr}\bigl(\f{W}^\top \f{U}\f{U}^\top \f{W}\bigr)
  \;+\;\mathrm{tr}\bigl(\f{W}^\top \f{U}\f{U}^\top \f{W}\,\f{V}\,\f{V}^\top\bigr).
\]

\medskip

\noindent\textbf{Group terms.}
\begin{align*}
\bigl\langle \f{W},\mathcal{R}_{\f{W}',\epsilon}(\f{W})\bigr\rangle
&= \mathrm{tr}(\f{W}^\top \f{W}) \\[-3pt]
&\quad+\;\epsilon^2\,\mathrm{tr}\bigl(\f{V}^\top \f{W}^\top \f{U}\,\f{\Sigma}^{-1}\f{U}^\top \f{W}\,\f{V}\,\f{\Sigma}^{-1}\bigr)\\[-3pt]
&\quad+\;\epsilon\,\mathrm{tr}\bigl(\f{W}^\top \f{U}\,\f{\Sigma}^{-1}\f{U}^\top \f{W}\bigr)
   \;-\;\epsilon\,\mathrm{tr}\bigl(\f{W}^\top \f{U}\,\f{\Sigma}^{-1}\f{U}^\top \f{W}\,\f{V}\f{V}^\top\bigr)\\[-3pt]
&\quad+\;\epsilon\,\mathrm{tr}\bigl(\f{W}^\top \f{W}\,\f{V}\,\f{\Sigma}^{-1}\f{V}^\top\bigr)
   \;-\;\epsilon\,\mathrm{tr}\bigl(\f{W}^\top \f{U}\f{U}^\top \f{W}\,\f{V}\,\f{\Sigma}^{-1}\f{V}^\top\bigr)\\[-3pt]
&\quad-\;\mathrm{tr}\bigl(\f{W}^\top \f{W}\,\f{V}\,\f{V}^\top\bigr)
   \;-\;\mathrm{tr}\bigl(\f{W}^\top \f{U}\f{U}^\top \f{W}\bigr)
   \;+\;\mathrm{tr}\bigl(\f{W}^\top \f{U}\f{U}^\top \f{W}\,\f{V}\,\f{V}^\top\bigr).
\end{align*}
\medskip

Then by rearranging each pair of trace‐terms we arrive at
\[
\begin{aligned}
\mathQER{\f{W}'}{\epsilon}(\mathbf{W}) = \frac{1}{2}\bigl\langle \f{W},\mathcal{R}_{\f{W}',\epsilon}(\f{W})\bigr\rangle
&= \frac{1}{2} \Big(\trace\bigl(\f{W}^\top \f{W}\bigr)\\[-3pt]
&\quad+\;\trace\bigl(\f{U}^\top \f{W}\,\f{W}^\top \f{U}\,\bigl(\epsilon\,\f{\Sigma}^{-1}-\f{I}\bigr)\bigr)\\[-3pt]
&\quad+\;\trace\bigl(\f{V}^\top \f{W}^\top \f{W}\,\f{V}\,\bigl(\epsilon\,\f{\Sigma}^{-1}-\f{I}\bigr)\bigr)\\[-3pt]
&\quad+\;\trace\bigl(\f{V}^\top \f{W}^\top \f{U}\,\bigl(\epsilon\,\f{\Sigma}^{-1}-\f{I}\bigr)\,\f{U}^\top \f{W}\,\f{V}\,\bigl(\epsilon\,\f{\Sigma}^{-1}-\f{I}\bigr)\bigr)\Big).
\end{aligned}
\]

For algorithmic simplicity, we now re-arrange the terms, leading to a simplified expression for this inner product in \Cref{alg:val}: Rewriting these terms, we obtain
\[
\begin{aligned}
&\mathQER{\f{W}'}{\epsilon}(\mathbf{W}) = \\
&=\frac{1}{2}\Big(\underbrace{\tr\bigl(\f{W}^\top \f{W}\bigr)}_{t_1}
   + \underbrace{\tr\!\bigl(\f{V}^\top \f{W}^\top \f{U}\,\f{S}\,\f{U}^\top \f{W}\,\f{V}\,\f{S}\bigr)}_{t_2}
   + \underbrace{\tr\!\bigl(\f{V}^\top \f{W}^\top \f{W}\,\f{V}\,\f{S}\bigr)}_{t_3}
   + \underbrace{\tr\!\bigl(\f{U}^\top \f{W}\,\f{W}^\top \f{U}\,\f{S}\bigr)}_{t_4}\Big),
\end{aligned}
\]
where $\f{S} = \epsilon\,\f{\Sigma}^{-1} - \f{I} = \diag(\f{s}_{\text{shift}})$, $ \f{A} = \f{U}^\top \f{W}$, $\f{B} = \f{W}\f{V}$ and $\f{C} = \f{U}^\top \f{W}\,\f{V}$. 
Using \(\tr(\f{A}^\top\f{A})=\|\f{A}\|_F^2\) and cyclicity, we see that the $t_i$ can be computed for $t=1,2,3,4$ such that
\[
\begin{aligned}
t_1 &= \|\f{W}\|_F^2 = \sum_{i=1}^{d_1} \sum_{j=1}^{d_2} \f{W}_{ij}^2,\\
t_2 &= \trace(\f{C}^{\top}\f{S}\f{C}\f{S}) = \sum_{i=1}^{r} \sum_{j=1}^{r} (\f{s}_{\text{shift}})_i (\f{s}_{\text{shift}})_j \f{C}_{ij}^2,\\
t_3 &=  \trace\left( \f{B}^{\top}\f{B} \f{S}\right) = \sum_{i=1}^{d_1} \sum_{j=1}^{r} \f{B}_{ij}^2 (\f{s}_{\text{shift}})_j, \\
t_4 &= \trace\bigl(\f{A} \f{A}^{\top} \f{S} \bigr) = \sum_{i=1}^{r} \sum_{j=1}^{d_2} (\f{s}_{\text{shift}})_i\f{A}_{ij}^2.
\end{aligned}
\]
The outline of the resulting algorithm is as follows.

\begin{algorithm}[H]
\caption{\textsc{Computation of $\QER$~function value} $\mathQER{\f{W}'}{\epsilon}(\f{W})$}
\begin{algorithmic}[1]\label{alg:val}
\REQUIRE $\mathbf{W}\in\mathbb{R}^{d_{1}\times d_{2}}$, 
         $\mathbf{U}\in\mathbb{R}^{d_{1}\times r}$, 
         $\mathbf{V}\in\mathbb{R}^{d_{2}\times r}$, 
         singular values $\mathbf{\sigma}\in\mathbb{R}_{> 0}^{r}$,
         smoothing parameter $\epsilon >0 $
\ENSURE  $\mathQER{\f{W}'}{\epsilon}(\mathbf{W})$

\STATE $\mathbf{s}_{\text{shift}} \gets \epsilon/\max(\boldsymbol\sigma,\epsilon) - 1$ \hfill$\triangleright\;2 r$ multiplications and subtractions
\STATE $\f{A} \gets \f{U}^{\top}\f{W}$  
  \hfill $\triangleright\;(2r-1) d_1  d_2$ additions and scalar products
\STATE $\f{B} \gets \f{W}\,\f{V}$  
  \hfill $\triangleright\;(2r-1) d_1  d_2$ additions and scalar products

\STATE $\f{C} \gets \mathbf{A} \f{V}$ 
  \hfill$\triangleright\;(2r-1)r d_2$ additions and scalar products
\STATE $t_{1} \gets \operatorname{sum}(\operatorname{sum}(\f{W}^{\circ 2},1),2)$  
\hfill$\triangleright\; 2 d_1 d_2 - 1$ additions and scalar products
\STATE $t_{2} \gets \operatorname{sum}(\operatorname{sum}(\f{s}_{\text{shift}} * \f{C}^{\circ 2} * \f{s}_{\text{shift}},1),2)$  
\hfill$\triangleright\; 4 r^2 - 1$ additions and scalar products
\STATE $t_{3} \gets \operatorname{sum}(\operatorname{sum}(\f{B}^{\circ 2} * \f{s}_{\text{shift}},1),2)$  \hfill$\triangleright\; 3 r d_1 - 1$ additions and scalar products
\STATE $t_{4} \gets  \operatorname{sum}(\operatorname{sum}(\f{s}_{\text{shift}} *\f{A}^{\circ 2},1),2)$  
\hfill$\triangleright\; 3 r d_2  - 1$ additions and scalar products

\STATE \textbf{return}  $f \gets (t_{1} + t_{2} + t_{3} + t_{4})/2$ \hfill$\triangleright\;3$ additions and $1$ multiplication
\end{algorithmic}
\end{algorithm}

In \Cref{alg:val}, we denote by $\f{W}^{\circ 2}$ the matrix of entrywise squares of $\f{W}$, by $\f{s} * \f{M}$ the row-wise scalar multplication of $\f{s}$ with $\f{M}$ and by $\operatorname{sum}(\cdot,1)$ and $\operatorname{sum}(\cdot,2)$ row- and column-wise summation of matrix entries, respectively.

To summarize, we obtain a total FLOP count of
$$
\text{FLOPs}
= 4 r d_1 d_2 + 2r^2 d_2+ 2 r d_2 + 3 r d_1 + 4 r^2 + 2 r = O(r d_1 d_2)
$$
for evaluating $\mathQER{\f{W}'}{\epsilon}(\f{W})$. 

\section{Computation of the Gradient of \QER}\label{sec:comp_grad}
In this section, we outline a simple algorithm that, given reference weight matrix $\f{W}' \in \R^{d_1 \times d_2}$, current layer weight matrix $\f{W} \in \R^{d_1 \times d_2}$ and smoothing parameter $\epsilon >0$, computes the gradient $\nabla_{\f{W}}\mathQER{\f{W}'}{\epsilon}(\mathbf{W})$ of the quadratic reweighted rank regularizer $\mathQER{\f{W}'}{\epsilon}(\mathbf{W})$ defined in \eqref{def:q3r_val} of Section \ref{section:Q3R:derivation}.

As mentioned in Section \ref{sec:nntraining} and after the proof of \Cref{lemma:RO:properties}.2, it follows from the self-adjointness of the reweighting operator $\RO{\f{W}'}{\epsilon}(\cdot)$ that
\[
\nabla_{\f{W}} \mathQER{\f{W}'}{\epsilon}(\f{W}) = \nabla_{\f{W}} \left(\frac{1}{2} \langle \f{W},\RO{\f{W}'}{\epsilon}(\f{W}) \rangle \right) = \RO{\f{W}'}{\epsilon}(\f{W})
\]

From \Cref{eq:efficient:RO:image}, we rearrange the summand for algorithmic simplicity
\[
\begin{aligned}
\mathcal{R}_{\f{W}',\epsilon}(\f{W})
&= \epsilon^2\,\f{U}\,\f{\Sigma}^{-1}\f{U}^\top\,\f{W}\,\f{V}\,\f{\Sigma}^{-1}\f{V}^\top
\;+\;
\epsilon\,\f{U}\,\f{\Sigma}^{-1}\f{U}^\top\,\f{W}\,(\f{I}-\f{V}\f{V}^\top)\\
&\quad+\;\epsilon\,(\f{I}-\f{U}\f{U}^\top)\,\f{W}\,\f{V}\,\f{\Sigma}^{-1}\f{V}^\top
\;+\;
(\f{I}-\f{U}\f{U}^\top)\,\f{W}\,(\f{I}-\f{V}\f{V}^\top).
\end{aligned}
\]
Using the equation 
\(\displaystyle \epsilon\,\f{\Sigma}^{-1} = \f{I} + \f{S}\) for the diagonal matrix \(\f{S}=\epsilon\,\f{\Sigma}^{-1}-\f{I}\), we can rewrite each term such that
\[
\begin{aligned}
T_1^\epsilon
&:= \epsilon^2\,\f{U}\,\f{\Sigma}^{-1}\f{U}^\top\,\f{W}\,\f{V}\,\f{\Sigma}^{-1}\f{V}^\top
= \f{U}\,(\f{I}+\f{S})\,\f{U}^\top\,\f{W}\,\f{V}\,(\f{I}+\f{S})\,\f{V}^\top,\\
T_2^\epsilon
&:= \epsilon\,\f{U}\,\f{\Sigma}^{-1}\f{U}^\top\,\f{W}\,(\f{I}-\f{V}\f{V}^\top)
= \f{U}\,(\f{I}+\f{S})\,\f{U}^\top\,\f{W}\,(\f{I}-\f{V}\f{V}^\top),\\
T_3^\epsilon
&:= \epsilon\,(\f{I}-\f{U}\f{U}^\top)\,\f{W}\,\f{V}\,\f{\Sigma}^{-1}\f{V}^\top
= (\f{I}-\f{U}\f{U}^\top)\,\f{W}\,\f{V}\,(\f{I}+\f{S})\,\f{V}^\top,\\
T_4^\epsilon
&:= (\f{I}-\f{U}\f{U}^\top)\,\f{W}\,(\f{I}-\f{V}\f{V}^\top).
\end{aligned}
\]
Collecting like terms in powers of \(\f{S}\) gives the compact form
\[
\mathcal{R}_{\f{W}',\epsilon}(\f{W})
= \f{W}
+ \underbrace{\f{U}\,\f{S}\,(\f{U}^\top\,\f{W} + \f{U}^\top\,\f{W}\,\f{V}\,\f{S}\,\f{V}^\top)}_{T_2}
+ \underbrace{\f{W}\,\f{V}\,\f{S}\,\f{V}^\top}_{T_3}
\]
with \(\f{S}=\epsilon\,\f{\Sigma}^{-1}-\f{I}\).

Now, we deduce the gradient in the following algorithm that is used in line 12 of \Cref{alg:train_adamq3r_final}.
We explain the step-by-step computation of our $R_{\mathbf W',\epsilon}(\mathbf W)$ stated below. 
\begin{algorithm}[H]
\caption{\textsc{Computation of $\QER$ Gradient} :
         Compute $\nabla_{\f{W}} \mathQER{\f{W}'}{\epsilon}(\f{W})  = \mathcal R_{\mathbf W',\epsilon}(\mathbf W)$}
\label{alg:grad}
\begin{algorithmic}[1]
\REQUIRE
       $\mathbf W\!\in\!\R^{d_{1}\times d_{2}}$,
       $\mathbf U\!\in\!\R^{d_{1}\times r}$,
       $\mathbf V\!\in\!\R^{d_{2}\times r}$,
       singular values $\boldsymbol\sigma\!\in\!\R^{r}_{>0}$,
       smoothing parameter $\epsilon$
\ENSURE $\mathbf G=\mathcal R_{\mathbf W',\epsilon}(\mathbf W)$
\STATE $\mathbf{s}_{\text{shift}} \gets \epsilon/\max(\boldsymbol\sigma,\epsilon) - 1$ \hfill$\triangleright\;2 r$ multiplications and subtractions
\STATE $\mathbf{A} \gets \mathbf U^{\top} \mathbf W$ \hfill$\triangleright\;(2r-1) d_1  d_2$ additions and scalar products
\STATE $\mathbf{B} \gets \mathbf W\, \mathbf V$ \hfill$\triangleright\;(2r-1) d_1  d_2$ additions and scalar products
\STATE $\mathbf{C} \gets \mathbf{A} \mathbf V$ \hfill$\triangleright\; (2r-1)r d_2$ additions and scalar products
\STATE $\mathbf{E} \gets \mathbf{U} * \mathbf{s}_{\text{shift}}$ \hfill$\triangleright\;d_1 r$ scalar products
\STATE $\mathbf{F} \gets \mathbf{V}  * \mathbf{s}_{\text{shift}}$ \hfill$\triangleright\;d_2 r$ scalar products
\STATE $\mathbf{T}_2 \gets \mathbf{E} (\mathbf{A} + \mathbf{C} \mathbf{F}^{\top})$ \hfill$\triangleright\;(2r -1)(r+d_1)d_2$ additions and scalar products
\STATE $\mathbf{T}_3 \gets \mathbf{B} \mathbf{F}^{\top}$  \hfill$\triangleright\;(2r -1)d_1 d_2$ additions and scalar products
\STATE $\mathbf{grad} \gets \mathbf{W} + \mathbf{T}_2 + \mathbf{T}_3$ \hfill$\triangleright\;2 d_1 d_2$ elementwise additions
\STATE \textbf{return} $\mathbf{grad}$
\end{algorithmic}
\end{algorithm}

The total number of floating point operations incurred by \Cref{alg:grad} for computation of regularizer gradients $\nabla_{\f{W}} \mathQER{\f{W}'}{\epsilon}(\f{W})$ is 
$$
\text{FLOPs}
= 8r d_1 d_2 - 2 d_1 d_2 + 4 r^2 d_2 + r (d_1-d_2) + 2 r = O(r d_1 d_2).
$$
\section{Experimental Results}\label{sec:more_exp}
In this section, we provide details regarding the experimental training methodology of \Cref{sec:experiments} as well as some additional data of these experiments.
\subsection{Experimental Protocol}
In the experiments, we compared unregularized training with models regularized by \QER, LoRITa \cite{lorita}, and LoRA \cite{hu2021lora}.
Due to the limitations of some techniques in providing clear truncation guidelines, we truncate each method at a range of truncation ranks $r$. We choose $r$ so that the resulting factor matrix pair has fewer parameters than the original matrix, ensuring a meaningful compression regime. In \cref{tab:vit_tiny_trunc}, we find that $p=0.20$ (corresponding to $r=19$) and $\lambda = 0.001$ performs best in terms of truncation--accuracy trade-offs.

\paragraph{ViT Hyperparameter Selection}
We selected each model's learning rate based on the performance of the unmodified ViT model for each respective dataset. All ViT models were configured with an input resolution of 224×224 pixels and utilized patch size 16 for tokenization. LoRITa hyperparameter optimization was conducted through grid search with regularization parameter $\lambda \in \{10^{-1}, 10^{-2}, 10^{-3}, 10^{-4}\}$ and rank parameter $d \in \{1, 2, 3\}$ to ensure optimal performance. The best-performing configuration from three independent runs was selected for evaluation across additional datasets. LoRA was evaluated across various target ranks, with the proportion parameter $p$ selected from $p \in \{0.05, 0.15, 0.3, 0.4, 0.5, 0.6, 0.7, 0.8, 0.9\}$. AdamQ3R underwent grid search optimization with regularization parameter $\lambda \in \{10^{-3}, 10^{-2}, 10^{-1}\}$ and proportion parameter $r \in \{0.05, 0.1, 0.15, 0.2\}$, with a stable $period = 5$. The regularization strength remained constant regardless of matrix dimensions. The QKV projection matrices were selected as the target for modification due to their prevalence in the literature as candidates for model compression and adaptation.

\subparagraph{CIFAR-100 Data Augmentation}
For CIFAR-100 experiments, we applied comprehensive data augmentation during training, including random cropping with 4-pixel padding from the original 32×32 images, random horizontal flipping, and subsequent resizing to 224×224 pixels to match the ViT input requirements. Images were normalized using channel-wise means of (0.4914, 0.4822, 0.4465) and standard deviations of (0.2023, 0.1994, 0.2010), corresponding to the CIFAR-100 dataset statistics. The test set underwent only resizing to 224×224 pixels and the same normalization procedure, without augmentation.

\paragraph{Fine-Tuning Experimental Details}
\noindent We fine-tuned the pretrained RoBERTa-Base model (from Hugging Face) on all nine GLUE tasks using a maximum sequence length of 512 and a batch size of 16 (30 for CoLA). We run all fine-tuning experiments with a target rank as low as $4$. The best performing setup had 'reweighting period' of $3$ and $\lambda \in \{1.5,2\}$. As shown in \Cref{tab:glue_scores}, we compare $\QER$ with LoRA and dense fine-tuning. The tasks and their metrics are summarized below:
\begin{itemize}
  \item \textbf{Single-Sentence Classification}
    \begin{itemize}
      \item CoLA: 8.5\,k train / 1\,k test; linguistic acceptability; Matthews correlation
      \item SST-2: 67\,k train / 1.8\,k test; sentiment classification; accuracy
    \end{itemize}

  \item \textbf{Similarity \& Paraphrase}
    \begin{itemize}
      \item MRPC: 3.7\,k train / 1.7\,k test; paraphrase detection; accuracy / F1
      \item STS-B: 7\,k train / 1.4\,k test; sentence similarity; Pearson / Spearman correlation
      \item QQP: 364\,k train / 391\,k test; question paraphrase detection; accuracy / F1
    \end{itemize}

      \item \textbf{Natural Language Inference}
    \begin{itemize}
      \item MNLI: 393\,k train / 20\,k matched + 20\,k mismatched test; entailment classification; accuracy
      \item QNLI: 105\,k train / 5.4\,k test; question–answer entailment; accuracy
      \item RTE: 2.5\,k train / 3\,k test; textual entailment; accuracy
      \item WNLI: 634 train / 146 test; coreference-based inference; accuracy
    \end{itemize}
\end{itemize}

\subsection{Fine-tuning Experiments}\label{sec:llama}
To assess the potential of \QER\ for fine-tuning large language models (LLMs), we conducted an additional experiment on Llama 3.2--3B using an NVIDIA A5000 GPU. 
Since the original \texttt{meta-llama/Llama-3.2-3B} checkpoint provides only the pretrained language model parameters, we instantiated the \texttt{LlamaForSequenceClassification} module, which attaches an additional linear projection, commonly referred to as the \emph{classification head}, on top of the final hidden-state representation. 
This head, whose weight matrix is registered as \texttt{score.weight} (and optionally \texttt{score.bias}), is absent from the checkpoint and was therefore initialized with random Gaussian values. 

The fine-tuning setup involves training this weight matrix as well as additive weights for the Q, K, and V layers.

With \QER\ (\(\lambda = 0.0001\), target rank \(= 4\), for 100 epochs), we achieved an F1 score of 81.89\% on the MRPC dataset of the GLUE benchmark, 
whereas full fine-tuning (effectively corresponding to setting \(\lambda = 0\)) resulted in an F1 score of 80.7\% after the same number of epochs. 

To further compare our proposed method, we conducted experiments on Llama3.2-1B on a subset of the GLUE tasks. \Cref{tab:llama3} provides a comparison of our method's performance with dense fine-tuning and LoRA.

\begin{table}[h!]
\centering
\caption{GLUE Benchmark Results for Llama3.2-1B }
\label{tab:llama3}
\begin{tabular}{lccccc}
\toprule
\textbf{Model} & \textbf{MRPC} & \textbf{SST-2} & \textbf{RTE} & \textbf{CoLA} & \textbf{STS-B} \\
\midrule

Dense & 86.3  & 95.3  & 77.3  & 47.09 & 90.5  \\
LoRA rank=4 & 87.3 & 95.7 & 80.9 & 61.8 & 89.1 \\
Q3R rank=4   & 87.8  & 94.4  & 64.7  & 51.8  & 87.04 \\
\bottomrule
\end{tabular}
\end{table}
We achieved the performance as mentioned in \cref{tab:llama3}, with a single value of $\lambda$ without any extensive hyperparameter search. 
While we acknowledge that better performance for GLUE tasks like RTE can often be obtained by starting from a more finely tuned initialization, these results demonstrate that the proposed method is effective even for the fine-tuning of large-scale LLMs.

\subsection{Additional pre-training experiments}

\Cref{tab:variant_performance} and \Cref{tab:cifar100_vitb_trunc} demonstrate the low-rank induction techniques on ViT models. We train ViT-B on CIFAR-100 in \Cref{tab:cifar100_vitb_trunc} and ViT-T on CIFAR-10 in \Cref{tab:variant_performance} respectively. A visual representation of the results in \Cref{tab:variant_performance} is also illustrated in \Cref{fig:cifar10_qkv_row} and that of \Cref{tab:cifar100_vitb_trunc} in \Cref{fig:cifar100_plots_row}.

\begin{table}[htbp]
  \centering
  \caption{Performance at varying percentages of parameters saved on ViT-T when regularizer is applied to only attention blocks.}
  \label{tab:variant_performance}
  \resizebox{\linewidth}{!}{%
  \begin{tabular}{lcccccccccc}
    \toprule
    \textbf{Model} & \textbf{15\%} & \textbf{20\%} & \textbf{30\%} & \textbf{40\%} & \textbf{50\%} & \textbf{60\%} & \textbf{70\%} & \textbf{80\%} & \textbf{90\%} & \textbf{100\%} \\
    \midrule
    Vanilla ViT-T &  0.4687 & 0.5430 & 0.5892 & 0.6170 & 0.6542 & 0.6694 & 0.6726 & 0.6788 & 0.6892 & 0.7027 \\
    Q3R rank=4, $\lambda$=0.01   & 0.6032 & 0.6630 & 0.7043 & 0.7060 & 0.7084 & 0.7080 & 0.7079 & 0.7079 & 0.7077 & 0.7152 \\
    Q3R rank=9, $\lambda$=0.01   & 0.4852 & 0.5616 & 0.6668 & 0.6772 & 0.6831 & 0.6835 & 0.6832 & 0.6829 & 0.6831 & 0.7034 \\
    Q3R rank=14, $\lambda$=0.01  & 0.4576 & 0.5812 & 0.6866 & 0.6970 & 0.6988 & 0.7002 & 0.6997 & 0.6996 & 0.7003 & 0.7104 \\
    Q3R rank=19, $\lambda$=0.01  & 0.6208 & 0.6792 & 0.6913 & 0.6977 & 0.6978 & 0.6970 & 0.6970 & 0.6967 & 0.6966 & 0.7061 \\
    Q3R rank=4, $\lambda$=0.001  & 0.5694 & 0.6629 & 0.7017 & 0.7053 & 0.7084 & 0.7091 & 0.7086 & 0.7086 & 0.7079 & 0.7158 \\
    Q3R rank=9, $\lambda$=0.001  & 0.4214 & 0.5304 & 0.6467 & 0.6680 & 0.6691 & 0.6692 & 0.6691 & 0.6696 & 0.6695 & 0.6955 \\
    Q3R rank=14, $\lambda$=0.001 & 0.6040 & 0.6364 & 0.6642 & 0.6695 & 0.6695 & 0.6694 & 0.6698 & 0.6701 & 0.6704 & 0.6819 \\
    LoRITa D=1, $\alpha$=0.1    & 0.1154 & 0.1217 & 0.1234 & 0.1091 & 0.1502 & 0.2310 & 0.3377 & 0.4197 & 0.5027 & 0.7061 \\
    LoRITa D=1, $\alpha$=0.001  & 0.1576 & 0.1512 & 0.1602 & 0.1586 & 0.1902 & 0.2456 & 0.2955 & 0.3221 & 0.4056 & 0.7086 \\
    LoRITa D=1, $\alpha$=0.01   & 0.1474 & 0.1514 & 0.1539 & 0.1644 & 0.2486 & 0.2968 & 0.4150 & 0.4857 & 0.5326 & 0.6911 \\
    LoRITa D=2, $\alpha$=0.01   & 0.2478 & 0.2559 & 0.3115 & 0.4065 & 0.5127 & 0.5641 & 0.5997 & 0.6456 & 0.6720 & 0.7287 \\
    LoRITa D=2, $\alpha$=0.1    & 0.2041 & 0.2589 & 0.3639 & 0.5397 & 0.5982 & 0.6408 & 0.6727 & 0.6910 & 0.7025 & 0.7595 \\
    LoRITa D=2, $\alpha$=0.001 & 0.2195 & 0.2774 & 0.4163 & 0.5115 & 0.5461 & 0.5675 & 0.6137 & 0.6417 & 0.6614 & 0.7393 \\
    LoRITa D=3, $\alpha$=0.01   & 0.3794 & 0.5074 & 0.5907 & 0.6385 & 0.6598 & 0.6761 & 0.6843 & 0.6916 & 0.6959 & 0.7462 \\
    LoRITa D=3, $\alpha$=0.001 & 0.3951 & 0.5147 & 0.5952 & 0.6308 & 0.6639 & 0.6676 & 0.6798 & 0.6847 & 0.6833 & 0.7367 \\
    LoRA rank=4 
& 0.3443 & 0.3443 & 0.3443 & 0.3443 & 0.3443 & 0.3443 & 0.3443 & 0.3443 & 0.3443 & 0.3443 \\
    LoRA rank=14& 0.5859 & 0.5859 & 0.5859 & 0.5859 & 0.5859 & 0.5859 & 0.5859 & 0.5859 & 0.5859 & 0.5859 \\
    \bottomrule
  \end{tabular}%
  }
\end{table}

\begin{table}[htbp]
  \centering
  \caption{CIFAR-100 Performance with ViT-B Attention Block Truncation}
  \label{tab:cifar100_vitb_trunc}
  \begin{tabular}{lccccccc}
    \toprule
    \textbf{Model} & \textbf{5\%} & \textbf{10\%} & \textbf{15\%} & \textbf{20\%} & \textbf{30\%} & \textbf{40\%} & \textbf{100\%} \\
    \midrule
    \textbf{Vanilla ViT-B Best} & \textbf{0.2773} & \textbf{0.3355} & \textbf{0.3659} & \textbf{0.3909} & \textbf{0.4172} & \textbf{0.4327} & \textbf{0.4686} \\
    \midrule
    \textbf{Q3R Best} & \textbf{0.3465} & \textbf{0.3979} & \textbf{0.4172} & \textbf{0.4301} & \textbf{0.4411} & \textbf{0.4485} & \textbf{0.4625} \\
    Q3R, rank=19, $\lambda$=0.01 & 0.3238 & 0.3979 & 0.4172 & 0.4301 & 0.4411 & 0.4485 & 0.4625 \\
    Q3R, rank=19, $\lambda$=0.001 & 0.3174 & 0.3790 & 0.3945 & 0.4050 & 0.4130 & 0.4197 & 0.4408 \\
    Q3R, rank=14, $\lambda$=0.001 & 0.3250 & 0.3978 & 0.4149 & 0.4240 & 0.4351 & 0.4429 & 0.4613 \\
    Q3R, rank=14, $\lambda$=0.01  & 0.3465 & 0.3950 & 0.4062 & 0.4172 & 0.4305 & 0.4389 & 0.4526 \\
    \midrule
    \textbf{LoRITa Best} & \textbf{0.0607} & \textbf{0.1375} & \textbf{0.2003} & \textbf{0.2401} & \textbf{0.2975} & \textbf{0.3269} & \textbf{0.4111} \\
    LoRITa D=2, $\alpha$=0.1    & 0.0152 & 0.0444 & 0.0883 & 0.1404 & 0.2428 & 0.3043 & 0.4021 \\
    LoRITa D=3, $\alpha$=0.001 & 0.0607 & 0.1375 & 0.2001 & 0.2380 & 0.2975 & 0.3269 & 0.4103 \\
    LoRITa D=3, $\alpha$=0.1   & 0.0570 & 0.1369 & 0.2003 & 0.2401 & 0.2946 & 0.3188 & 0.4111 \\
    \bottomrule
  \end{tabular}
\end{table}

\subsection{Computational Aspects}

For some experiments (e.g., \QER), we used an NVIDIA A$5000$ GPU to train the ViT models. The remainder of the experiments were performed on an NVIDIA V$100$ GPU with $32$GB memory. The fine-tuning experiments were all performed on NVIDIA A$5000$ GPUs.

\subsection{Computational Overhead of Methodology}

In \Cref{alg:val,alg:grad}, we provide the detailed stepwise FLOP count of our method. Following the experiments testing the influence of the reweighting period on the truncation, we report the average training time of the first 5 epochs below in \Cref{tab:comp-overhead}. Note that regularization was only applied on the QKV matrices of a ViT-Tiny Transformer.

\begin{table}[t]
\centering
\small
\caption{Average training time (5 epochs) and reserved GPU memory for ViT-Tiny with QKV-only regularization.}
\label{tab:comp-overhead}
\begin{tabular}{lccc}
\toprule
\textbf{Model} & \textbf{Variant / Setting} & \textbf{Avg. Time (s)} & \textbf{Reserved GPU Mem (GB)} \\
\midrule
Base & --- & 150.91 & 6.26 \\
\midrule
AdamQ3R & $T{=}500$ & 205.34 & 6.28 \\
        & $T{=}100$ & 207.39 & 6.31 \\
        & $T{=}50$  & 209.29 & 6.62 \\
        & $T{=}20$  & 217.90 & 8.22 \\
        & $T{=}10$  & 218.88 & 8.62 \\
\midrule
LoRA   & $R{=}4$   & 124.83 & 11.89 \\
       & $R{=}14$  & 144.51 & 11.87 \\
       & $R{=}28$  & 145.28 & 11.89 \\
\midrule
Depth1 baseline & --- & 157.29 & 8.35 \\
\bottomrule
\end{tabular}
\end{table}

Based on the theoretical computational overhead outlined in \cref{alg:val,algo:updateR}, along with the results above, we expect $\QER$ to incur additional computational and memory overhead in our (currently suboptimal) implementation. We attribute the lower reserved GPU memory to implementation differences in the optimizer.

\section{Ablation Studies}\label{sec:ablation}
\Cref{tab:adam3rvq3r} presents the comparison between the regularization term \cref{def:q3r_val} and \cref{alg:train_adamq3r_final} evaluated across varied truncation levels and hyperparameters. Initially AdamQ3R presents competitive performance against $\QER$; however, $\QER$ provides superior truncation performance on the validation set at lower truncation values (20\% and below). The impact of the regularization parameter is notably small, with $\lambda=0.001$ generally providing superior performance. We have used \cref{alg:train_adamq3r_final} in our experiments as it provides more robustness to the hyperparameters as discussed in \Cref{sec:hyperparameter:ablations}.

\begin{table}[ht]
\centering
\caption{Performance of Vit-T with AdamQ3R and \QER\ across different truncation level on CIFAR10.}
\label{tab:adam3rvq3r}
\resizebox{0.95\textwidth}{!}{%
\begin{tabular}{lccccccc}
\toprule
\textbf{Model Name} & \textbf{5\%} & \textbf{10\%} & \textbf{15\%} & \textbf{20\%} & \textbf{30\%} & \textbf{40\%} & \textbf{100\%} \\
\midrule
AdamQ3R, rank = 0.05$^*$ & 0.1904 & 0.4600 & 0.6032 & 0.6630 & \textbf{0.7043} & 0.7060 & 0.7152 \\
AdamQ3R, rank = 0.10$^*$ & 0.2081 & 0.3591 & 0.4852 & 0.5616 & 0.6668 & 0.6772 & 0.7034 \\
AdamQ3R, rank = 0.15$^*$ & 0.1266 & 0.3970 & 0.4576 & 0.5812 & 0.6866 & 0.6970 & 0.7104 \\
AdamQ3R, rank = 0.20$^*$ & 0.1740 & 0.4717 & 0.6208 & 0.6792 & 0.6913 & 0.6977 & 0.7061 \\
AdamQ3R, rank = 0.05$^\wedge$ & 0.2701 & 0.4608 & 0.5694 & 0.6629 & 0.7017 & 0.7053 & \textbf{0.7158} \\
AdamQ3R, rank = 0.10$^\wedge$ & 0.1066 & 0.1741 & 0.4214 & 0.5304 & 0.6467 & 0.6680 & 0.6955 \\
AdamQ3R, rank = 0.15$^\wedge$ & 0.2467 & 0.4993 & 0.6040 & 0.6364 & 0.6642 & 0.6695 & 0.6819 \\
Adam w/ Q3R  in loss, rank = 0.04$^*$ & 0.2476 & 0.5171 & 0.6544 & 0.6862 & 0.6884 & 0.6874 & 0.6920 \\
Adam w/ Q3R in loss, rank = 0.09$^*$ & 0.1959 & 0.5040 & 0.6789 & 0.6789 & 0.6862 & 0.6870 & 0.6827 \\
Adam w/ Q3R in loss, rank = 0.19$^*$ & 0.2774 & 0.4317 & 0.6337 & 0.6697 & 0.6818 & 0.6896 & 0.6901 \\
Adam w/ Q3R in loss, rank = 0.04$^\wedge$ & 0.2828 & 0.4699 & 0.6004 & 0.6569 & 0.6801 & 0.6801 & 0.6934 \\
Adam w/ Q3R in loss, rank = 0.09$^\wedge$ & \textbf{0.3202} & \textbf{0.5527} & \textbf{0.6827} & \textbf{0.7024} & 0.7024 & \textbf{0.7079} & 0.7081 \\
Adam w/ Q3R  in loss, rank = 0.19$^\wedge$ & 0.3119 & 0.4440 & 0.6137 & 0.6670 & 0.6839 & 0.6877 & 0.6885 \\
\bottomrule
\end{tabular}
}
\vspace{1mm}
\footnotesize
\textbf{Legend:} $^*$ Regularization parameter \( \lambda = 0.01 \), \quad $^\wedge$ Regularization parameter \( \lambda = 0.001 \)
\end{table}

\subsection{Robustness to Hyperparameter Variations} \label{sec:hyperparameter:ablations}

Empirically, we found Q3R to be quite robust to its hyperparameters within reasonable ranges. Below, in \Cref{tab:lambda-sensitivity,tab:rank-choice}, we are providing few empirical evidences from \cref{tab:vit_tiny_trunc}on ViT-Tiny. However, similar results can be conducted across other datasets and backbones as in Table 4 (CIFAR-100 on ViT-Base). Generally, when choosing $\lambda$, a viable rule is easy to tell if the choice of lambda is too small by monitoring if the Q3R value increases within the first few epochs. We recommend a value of $\lambda$ that is slightly larger than the lower bound of the divergence threshold, as determining if the $\lambda$ value is too large remains a challenge.

\begin{table}[ht]
\centering
\small
\caption{Effect of regularization strength $\lambda$ on accuracy (\%).}
\label{tab:lambda-sensitivity}
\begin{tabular}{lcc}
\toprule
\textbf{Parameter Retention} & \textbf{$\lambda = 0.001$} & \textbf{$\lambda = 0.01$} \\
\midrule
10\%   & 46.08 & 46.30 \\
15\%   & 63.40 & 63.65 \\
20\%  & 66.30 & 66.40 \\
30\%  & 70.68 & 70.74 \\
40\%  & 71.04 & 71.09 \\
100\% (no trunc.) & 71.95 & 71.52 \\
\bottomrule
\end{tabular}
\end{table}

From \Cref{tab:lambda-sensitivity}, we observe that beyond the 20\% retention point, the absolute accuracy gap never exceeds 0.3 percentage points between $\lambda = 0.001$ and $\lambda = 0.01$. This confirms that AdamQ3R is largely insensitive to $\lambda$ within the 0.001–0.01 range in this practical operating regime.

\subsubsection{Choice of Target Rank}

Furthermore, in \Cref{tab:rank-choice}, we study the effect of the other hyperparameter, the target rank $r_{\text{target}}$. Here we scale the target rank by the layer dimensions so that a single hyper-parameter $\rho$ works for networks of any size. For a weight matrix of shape $(m \times n)$ we set target rank $(r_{\text{target}}) = \rho \frac{m+n}{2mn}$. This definition makes the choice of $r$ directly transferable across layers with different values of $m$ and $n$.

\begin{table}[ht]
\centering
\small
\caption{Effect of target rank $r_{\text{target}}$ on accuracy for different parameter retention values.}
\label{tab:rank-choice}
\begin{tabular}{lccc}
\toprule
\textbf{Parameter retention} & $\boldsymbol{\rho = 0.10}$ & $\boldsymbol{\rho = 0.15}$ & $\boldsymbol{\rho = 0.20}$ \\
\midrule
5\%   & 0.2322 & 0.1796 & 0.5606 \\
10\%  & 0.4085 & \textbf{0.4758} & 0.5883 \\
15\%  & 0.5606 & 0.4737 & \textbf{0.6175} \\
20\%  & 0.6295 & 0.4387 & \textbf{0.6511} \\
30\%  & 0.6526 & 0.3896 & \textbf{0.6749} \\
40\%  & 0.6654 & 0.4335 & \textbf{0.6833} \\
100\% (no trunc.) & 0.6843 & 0.6737 & \textbf{0.6990} \\
\bottomrule
\end{tabular}
\end{table}

From \Cref{tab:rank-choice}, we observe that once $\geq$30\% of the parameters are retained, the choice of rank changes accuracy by $\leq$3.8\%, confirming low sensitivity to rank in this regime.

\subsection{Choice of Reweighting Period}

We observe in \cref{reweightingperiod} that higher reweighting periods  of $T = 300, 200$, and $100$ result in underperformance in comparison to the lower reweighting periods, $T =25, 5$. Although, longer reweighting periods provide some computational performance gains based upon formulation \cref{algo:updateR}, we observe superior performance for faster intervals which implies that frequent updates of the reweighting operator are more effective as it enforces the IRLS-majorisation of the logdet. In this regard, we recommend a reweighting period of $T = 5$ for the best performance from all the experimenets that we have conducted.

\begin{table}[ht]
\centering
\caption{Model performance under different truncation percentages. Each model trained with $\lambda = 0.001, r=0.2$, Trained for 30 epochs on CIFAR-10 with ViT-Tiny.}
\label{reweightingperiod}
\begin{tabular}{lccccccc}
\toprule
\textbf{Model Name} & \textbf{5\%} & \textbf{10\%} & \textbf{15\%} & \textbf{20\%} & \textbf{30\%} & \textbf{40\%} & \textbf{100\%} \\
\midrule
AdamQ3R,$T$ = 300 & 0.2999 & 0.5609 & 0.6651 & 0.6801 & 0.6847 & 0.6838 & 0.6827 \\
AdamQ3R, $T$ = 200 & 0.2601 & 0.5914 & 0.6519 & 0.6766 & 0.6869 & 0.6885 & 0.6871 \\
AdamQ3R, $T$ = 100 & 0.2764 & 0.4871 & 0.6623 & 0.6776 & 0.6869 & 0.6885 & 0.6936 \\
AdamQ3R, $T$ = 25  & \textbf{0.3729} & 0.5813 & 0.6555 & 0.6725 & 0.6734 & 0.6778 & 0.6790 \\
AdamQ3R, $T$ = 5   & 0.1740 & \textbf{0.6838} & \textbf{0.6828} & \textbf{0.6949} & \textbf{0.6995} & \textbf{0.7000} & \textbf{0.7031} \\
\bottomrule
\end{tabular}
\end{table}

\subsection{Merits of Low-Rank Initialization}

In our experiments, we implemented spectral initialization where the regularized weight matrix has a rank greater than or equal to the target rank hyperparameter specified in Q3R. Our leading hypothesis for the poor validation accuracy following initialization is that Q3R imposes a strong constraint on the optimization landscape when the rank is low, thereby limiting the model's capacity to explore more expressive solutions during training. This hypothesis is further supported by the fine-tuning results on RoBERTa in \Cref{tab:glue_scores}, where Q3R is applied to the pretrained model and continuously tuned on the GLUE benchmark tasks.
We achieve notable results that support the general intuition that models tend to learn better representations through expressive architectures and training regimes. We experimented only on the smaller dataset to study how the initialization affects the performance of Q3R. We provide one example of empirical evidence on CIFAR-10 with ViT-Tiny, with $r = 0.05$ and $\lambda = 0.001$, in \Cref{tab:lowrank-init}. We observe that low-rank initialization is unable to surpass the performance of Q3R without such a constraint on the initialization, as reported in \Cref{tab:lowrank-init}. However, the accuracy lies within a $1\%$ range of the full-rank initialized model, which shows that our proposed method can be implemented in a resource-constrained setup as well.

\begin{table}[h]
\centering
\small
\caption{Comparison of AdamQ3R with and without low-rank initialization.}
\label{tab:lowrank-init}
\begin{tabular}{lcc}
\toprule
\textbf{Parameter retention} & \textbf{AdamQ3R + low-rank init.} & \textbf{AdamQ3R (Q3R, no low-rank init.)} \\
\midrule
15\%  & 65.62 & 64.30 \\
20\%  & 68.30 & \textbf{69.60} \\
30\%  & 68.90 & \textbf{70.60} \\
40\%  & 69.26 & \textbf{70.84} \\
100\% (no trunc.) & 70.02 & \textbf{72.19} \\
\bottomrule
\end{tabular}
\end{table}

\paragraph{Code}: The code is available at \url{https://github.com/ThatE10/q3r.git}.

\end{document}